\newtheorem{theorem}{Theorem}
\newtheorem{lemma}{Lemma}
\newtheorem{defn}{Definition}
\DeclareMathOperator*{\argmin}{arg\,min}
\DeclareMathOperator*{\argmax}{arg\,max}
\def\O{\mathcal{O}}
\def\K{\mathcal{K}}
\def\R{\mathbb{R}}
\title{Fast First-Order Methods for Monotone Strongly DR-Submodular Maximization}
\author{
  Omid Sadeghi \\
  University of Washington\\
  \texttt{omids@uw.edu} \\
   \And
  Maryam Fazel \\
  University of Washington \\
  \texttt{mfazel@uw.edu} \\
}
\begin{document}
\maketitle

\begin{abstract}
  Continuous DR-submodular functions are a class of functions that satisfy the Diminishing Returns (DR) property, which implies that they are concave along non-negative directions. Existing works have studied monotone continuous DR-submodular maximization subject to a convex constraint and have proposed efficient algorithms with approximation guarantees. However, in many applications, e.g., computing the stability number of a graph and mean-field inference for probabilistic log-submodular models, the DR-submodular function has the additional property of being \emph{strongly} concave along non-negative directions that could be utilized for obtaining faster convergence rates. In this paper, we first introduce and characterize the class of \emph{strongly DR-submodular} functions and show how such a property implies strong concavity along non-negative directions. Then, we study $L$-smooth monotone strongly DR-submodular functions that have bounded curvature, and we show how to exploit such additional structure to obtain algorithms with improved approximation guarantees and faster convergence rates for the maximization problem. In particular, we propose the SDRFW algorithm that matches the provably optimal $1-\frac{c}{e}$ approximation ratio after only $\lceil\frac{L}{\mu}\rceil$ iterations, where $c\in[0,1]$ and $\mu\geq 0$ are the curvature and the strong DR-submodularity parameter. Furthermore, we study the Projected Gradient Ascent (PGA) method for this problem and provide a refined analysis of the algorithm with an improved $\frac{1}{1+c}$ approximation ratio (compared to $\frac{1}{2}$ in prior works) and a linear convergence rate. Given that both algorithms require knowledge of the smoothness parameter $L$, we provide a \emph{novel} characterization of $L$ for DR-submodular functions showing that in many cases, computing $L$ could be formulated as a convex optimization problem, i.e., a geometric program, that could be solved efficiently. Experimental results illustrate and validate the efficiency and effectiveness of our algorithms.
\end{abstract}

\section{Introduction}\label{sec:1}
Submodular set functions are a class of discrete functions that exhibit the Diminishing Returns (DR) property. A set function $F$ defined over the ground set $V$ is called submodular if for all items $j\in V$ and $A\subseteq B\subseteq V\setminus \{j\}$, we have:
\begin{equation*}
    F(A\cup \{j\})-F(A)\geq F(B\cup \{j\})-F(B).
\end{equation*}
In other words, the gain of adding a particular element $j$ to a set decreases as the set gets larger. Such functions are commonly used to quantify coverage and diversity in discrete domains. Submodular set function optimization has found many applications in machine learning such as viral marketing \citep{kempe2003maximizing}, dictionary learning \citep{das2011submodular}, feature selection for classification \citep{krause2007near} and document summarization \citep{lin2011class} to name a few.\\
While the DR property is mostly associated with submodular set functions, it can be defined similarly for continuous functions. A differentiable function $f:\mathcal{K} \to \mathbb{R}$, $\mathcal{K} \subseteq \mathbb{R}_+^n$, satisfies the DR property if for all $x,y\in \K$ such that $x_i\leq y_i~\forall i\in[n]$, we have $\nabla_if(x)\geq \nabla_i f(y)$ for all $i\in[n]$, i.e., $\nabla f$ is an order-reversing mapping. Continuous functions that satisfy the DR property are called DR-submodular. Similar to submodular set functions, continuous DR-submodular functions find applications in multiple domains such as influence and revenue maximization, MAP inference for DPP (Determinantal Point Process) and mean-field inference of probabilistic graphical models (see \citet[Section 6]{bian2020continuous} for more applications and details). While DR-submodular functions are generally non-convex/non-concave, the DR property provides a natural structure that allows designing tractable approximation algorithms. In particular, DR-submodular functions are concave along non-negative directions \citep{bian2017guaranteed}, i.e., for all $x,y$ such that $x_i\leq y_i~\forall i\in[n]$, we have $f(y)\leq f(x)+\langle \nabla f(x),y-x\rangle$.\\
Monotone DR-submodular maximization subject to a convex constraint has been previously studied in the literature. \citet{bian2017guaranteed} proposed a Frank-Wolfe variant that obtains a provably optimal $1-\frac{1}{e}$ approximation guarantee at a sub-linear convergence rate. Later, \citet{hassani2017gradient} studied the well-known Projected Gradient Ascent (PGA) method for this problem and proved that PGA has a $\frac{1}{2}$ approximation ratio and sub-linear rate of convergence.\\
A number of the DR-submodular objective functions in the aforementioned applications of the continuous DR property are indeed strongly concave along non-negative directions. For instance, consider a graph $G=(V,E)$ with adjacency matrix $A$. Computing the stability number $s(G)$ of the graph (i.e., cardinality of the largest subset of vertices such that no two vertices in this subset are adjacent) is a well-known NP-hard combinatorial problem. This problem could be formulated as $s(G)^{-1}=\min_{x\in \Delta}x^T (A+I) x$ where $\Delta=\{x\in \R^{|V|}:1^Tx=1,x_v\geq 0~\forall v\in V\}$ is the standard simplex and $I$ is the identity matrix \citep{motzkin1965maxima}. We can rewrite the problem as:
\begin{equation*}
    s(G)^{-1}=\min_{x\in \Delta}x^T (A+I) x=-\max_{x\in \Delta}x^T (-A-I) x=2-\max_{x\in \Delta}x^T (-A-I) x+\underbrace{21^Tx}_{=2}.
\end{equation*}
It will soon be clear that the function $x^T (-A-I) x+21^Tx$ is $2$-strongly DR-submodular (see Definition \ref{def:1} in Section \ref{sec:2}) and thus, finding the stability number of a graph could be formulated as a convex-constrained monotone $2$-strongly DR-submodular maximization problem. Another example is the mean-field inference problem for probabilistic log-submodular models (Section 6.6 of \citet{bian2020continuous}). Let $F:2^V\to \R$ be a submodular set function. Consider distributions over subsets $S\subseteq V$ of the form $P(S)=\frac{1}{Z}\text{exp}(F(S))$ where the normalizing quantity $Z=\sum_{S\subseteq V}\text{exp}(F(S))$ is called the \emph{partition function}. In general, computing the partition function involves summing over exponential number of terms and is not computationally feasible. Alternatively, one can use mean-field inference to approximate $P(S)$ by a completely factorized distribution $Q(S)$, i.e., elements $i\in V$ are picked independently and $Q(S|x)=\prod_{i\in S}x_i \prod_{j\notin S}(1-x_j)$ where $x\in[0,1]^{V}$ is the vector of marginals, by minimizing the KL divergence $\textbf{KL}(x)=\sum_{S\subseteq V}Q(S|x)\ln \big(\frac{Q(S|x)}{p(S)}\big)$ between the two distributions. $\textbf{KL}(x)$ can be written as:
\begin{equation*}
    \textbf{KL}(x)=-\sum_{S\subseteq V}F(S)\prod_{i\in S}x_i \prod_{j\notin S}(1-x_j)+\sum_{i=1}^{|V|}\big(x_i\ln x_i+(1-x_i)\ln (1-x_i)\big)+\ln Z.
\end{equation*}
It will be clear later that the problem $\max_{x\in[0,1]^{V}}-\textbf{KL}(x)$ (equivalent to minimizing $\textbf{KL}(x)$) is a 4-strongly DR-submodular maximization problem.
\subsection{Contributions}
In this paper, we precisely characterize the class of \emph{strongly DR-submodular} functions, i.e., DR-submodular functions that are strongly concave along non-negative directions. We consider the optimization problem $\max_{x\in \K}f(x)$ where $\K$ is a convex set and $f$ is a monotone, smooth and strongly DR-submodular function with bounded curvature $c_f\in[0,1]$, and we provide algorithms with refined approximation guarantees that exploit the strong DR-submodularity structure of the objective function. Specifically, we make the following contributions:\\
$\bullet$ We propose the SDRFW algorithm in Section \ref{sec:3} that obtains the approximation guarantee $1-\frac{c_f}{e}$ after $\lceil\frac{L}{\mu}\rceil$ iterations, where $L$ is the smoothness parameter and $\mu$ is the strong DR-submodularity parameter of the function $f$.\\
$\bullet$ In Section \ref{sec:4}, we analyze PGA for our problem and we provide a refined and sharper analysis of the algorithm showing that PGA has an improved $\frac{1}{1+c_f}$ approximation guarantee (compared to $\frac{1}{2}$ in prior works) at a linear convergence rate, i.e., it only takes $\O(\ln(\frac{1}{\epsilon}))$ iterations to get within $\epsilon$ of $\frac{1}{1+c_f}\text{OPT}$, where $\text{OPT}$ denotes the optimal value of the problem.\\
$\bullet$ We also study \emph{online} strongly DR-submodular maximization in Section \ref{sec:4-1}. We analyze the online counterpart of PGA, called Online Gradient Ascent (OGA), for this problem and we show how our techniques could be used to obtain improved logarithmic bounds for the $(\frac{1}{1+c})$-regret of the algorithm.\\
$\bullet$ Given that both SDRFW and PGA require knowledge of the smoothness parameter $L$, in Section \ref{sec:5}, we provide a \emph{novel} characterization of $L$ for (strongly) DR-submodular functions showing that in many cases, computing $L$ could be formulated as a convex optimization problem, i.e., a geometric program, that could be solved efficiently. Therefore, we can compute $L$ in an efficient manner before running the algorithms.\\
Finally, we test our algorithms on a class of strongly DR-submodular functions and also for the problem of computing the stability number of graphs in Section \ref{sec:6}. All missing proofs are provided in the Appendix.
\section{Preliminaries}\label{sec:2}
\textbf{Notation.} The set $\{1,2,\dots,n\}$ is denoted by $[n]$. For a vector $x\in \R^n$, $x_i$ is used to denote the $i$-th entry of $x$. Similarly, for a matrix $A\in \R^{n\times n}$, we use $A_{ij}$ to indicate the entry in the $i$-th row and $j$-th column of the matrix. The inner product of two vectors $x,y\in\mathbb{R}^n$ is denoted by either $\langle x, y \rangle$ or $x^T y$. Moreover, for two vectors $x,y\in \mathbb{R}^n$, we have $x\preceq y$ if $x_i \leq y_i$ holds for every $i\in[n]$. Also, we use $x\vee y=\max\{x,y\}$ and $x\wedge y=\min\{x,y\}$ to denote the component-wise maximum and minimum of $x,y\in \R^n$, i.e., for all $i\in[n]$, $[x\vee y]_i=\max\{x_i,y_i\}$ and $[x\wedge y]_i=\min\{x_i,y_i\}$ holds. $\|\cdot\|$ indicates the Euclidean norm by default. We use $\text{Proj}_{\K}(x)$ to denote the Euclidean projection of $x$ onto the convex set $\K$, i.e., $\text{Proj}_{\K}(x)=\argmin_{z\in \K}\|z-x\|$. A function $f:\K \to \R$ is called $\beta$-Lipschitz if for all $x,y\in \K$, we have $|f(y)-f(x)|\leq \beta \|y-x\|$. Also, $f$ is called monotone if for all $x,y\in \K$ such that $x\preceq y$, $f(x)\leq f(y)$ holds. The diameter of a set $\K$ is defined as $R=\max_{x,y\in \K}\|y-x\|$. By default, $f(\cdot)$ indicates a continuous function and $F(\cdot)$ denotes a discrete set function.\\
\textbf{DR-submodular functions.} A differentiable function $f:\K \rightarrow \mathbb{R}$, $\K\subset \mathbb{R}_+^n$, is called DR-submodular if for all $x,y$ such that $x\preceq y$, $\nabla f(x) \succeq \nabla f(y)$ holds. If $f$ is twice differentiable, DR-submodularity is equivalent to the Hessian matrix $\nabla^2 f(x)$ being entry-wise non-positive for all $x\in \K$. For instance, for a submodular set function $F$ over the ground set $V$, the multilinear extension $f:[0,1]^{|V|} \rightarrow \mathbb{R}$ of $F$ defined as 
$f(x)=\sum_{S\subset V}F(S)\prod_{i\in S}x_i \prod_{j\notin S}(1-x_j)$ is DR-submodular. While DR-submodularity and concavity are equivalent for the special case of $n=1$, DR-submodular functions are generally non-concave. Nonetheless, an important consequence of DR-submodularity is concavity along non-negative directions \citep{bian2017guaranteed}, i.e., for all $x,y$ such that $x\preceq y$, we have $f(y)\leq f(x)+\langle \nabla f(x),y-x\rangle$.\\
\textbf{Strongly DR-submodular functions.} We define the class of strongly DR-submodular functions below.
\begin{defn}\label{def:1}
For $\mu\geq 0$, we call a differentiable function $f:\K \rightarrow \mathbb{R}$, $\K\subset \mathbb{R}_+^n$, to be $\mu$-strongly DR-submodular if any of the following equivalent properties hold:\\
$\bullet$ $f(\cdot)+\frac{\mu}{2}\|\cdot\|^2$ is DR-submodular.\\
$\bullet$ For all $x,y\in \K$ such that $x\preceq y$, we have
\begin{equation*}
\nabla f(x)\succeq \nabla f(y)+\mu(y-x).
\end{equation*}
$\bullet$ If $f$ is twice differentiable, $\nabla^2_{ii}f(x)\leq -\mu~\forall i\in[n]$ and $\nabla^2_{ij}f(x)\leq 0~\forall i\neq j$ holds for all $x\in \K$.
\end{defn}
As an example, consider the problem of computing the stability number of a graph (introduced in Section \ref{sec:1}). The Hessian of the objective function $x^T (-A-I) x+21^Tx$ is $H=-2A-2I$. Given that $H_{ii}=-2$ and $H_{ij}\leq 0$ for all $i\neq j\in [n]$, the objective function is $2$-strongly DR-submodular.\\
We can prove that $\mu$-strongly DR-submodular functions are indeed $\mu$-strongly concave along non-negative directions. This property is extensively used in the design and analysis of our algorithms.
\begin{lemma}\citep{sadeghi2021improved}\label{lemma1}
If $f:\K \rightarrow \mathbb{R}$, $\K\subset \mathbb{R}_+^n$, is differentiable and $\mu$-strongly DR-submodular, for all $x\in \mathcal{K}$ and $v\succeq 0$ or $v\preceq 0$, the following holds:
\begin{equation*}
    f(x+v)\leq f(x)+\langle \nabla f(x),v\rangle -\frac{\mu}{2}\|v\|^2.
\end{equation*}
\end{lemma}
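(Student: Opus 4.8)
The plan is to reduce everything to the first-order (gradient) characterization of $\mu$-strong DR-submodularity in Definition \ref{def:1}, namely that $\nabla f(a)\succeq \nabla f(b)+\mu(b-a)$ whenever $a\preceq b$, and then integrate along the segment from $x$ to $x+v$. Writing the gap via the fundamental theorem of calculus,
\begin{equation*}
f(x+v)-f(x)-\langle \nabla f(x),v\rangle=\int_0^1 \langle \nabla f(x+tv)-\nabla f(x),v\rangle\,dt,
\end{equation*}
it suffices to show that the integrand is bounded above by $-\mu t\|v\|^2$ for every $t\in[0,1]$; integrating this pointwise bound then produces exactly $-\frac{\mu}{2}\|v\|^2$, which is the claim.

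First I would treat the case $v\succeq 0$. Then $x\preceq x+tv$ for all $t\in[0,1]$, so applying the characterization to the ordered pair $(x,x+tv)$ gives $\nabla f(x+tv)-\nabla f(x)\preceq -\mu t v$. Since $v\succeq 0$, taking the inner product with $v$ preserves the componentwise inequality, so $\langle \nabla f(x+tv)-\nabla f(x),v\rangle\leq -\mu t\|v\|^2$, as needed.

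Next I would treat the case $v\preceq 0$. Now $x+tv\preceq x$, so the correctly ordered pair is $(x+tv,x)$, and the characterization gives $\nabla f(x+tv)\succeq \nabla f(x)-\mu t v$, i.e. $\nabla f(x+tv)-\nabla f(x)\succeq -\mu t v$. The subtle point, and the step I expect to require the most care, is that taking the inner product with $v\preceq 0$ \emph{reverses} the direction of a componentwise inequality; multiplying each coordinate relation $w_i\geq -\mu t v_i$ by $v_i\leq 0$ yields $w_i v_i\leq -\mu t v_i^2$, so summing again gives $\langle \nabla f(x+tv)-\nabla f(x),v\rangle\leq -\mu t\|v\|^2$. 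Thus both sign cases produce the same pointwise bound on the integrand, and the result follows upon integration.

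Alternatively (sketch), one may invoke the equivalence in Definition \ref{def:1} that $g:=f+\frac{\mu}{2}\|\cdot\|^2$ is DR-submodular, apply concavity along the relevant monotone direction to $g$, and then cancel the quadratic terms using $\frac{\mu}{2}\|x\|^2-\frac{\mu}{2}\|x+v\|^2=-\mu\langle x,v\rangle-\frac{\mu}{2}\|v\|^2$. This reproduces the claim cleanly for $v\succeq 0$, but it still requires the same sign bookkeeping to cover the non-positive direction: the concavity inequality as stated in the excerpt places the gradient at the \emph{smaller} endpoint, whereas the desired bound needs it at $x$, so one must additionally note that the entrywise non-positivity of $\nabla^2 g$ forces $v^T\nabla^2 g\, v\leq 0$ whenever all coordinates of $v$ share a sign, giving concavity of $g$ along non-positive directions as well.
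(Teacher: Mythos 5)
Your proof is correct, and it takes a genuinely different route from the paper's. The paper proves the lemma via the \emph{second-order} characterization in Definition \ref{def:1}: it first shows that $\nabla^2 f(z)v+\mu v\preceq 0$ for $v\succeq 0$ using the sign pattern $\nabla^2_{ii}f\leq -\mu$, $\nabla^2_{ij}f\leq 0$, then converts the integrand $\langle \nabla f(x+tv)-\nabla f(x),v\rangle$ into a Hessian quadratic form $\langle t\nabla^2 f(z_t)v,v\rangle$ via the mean value theorem and concludes from the sign structure, dismissing the $v\preceq 0$ case as ``similar.'' You instead bound the same integrand directly from the \emph{first-order} gradient-monotonicity characterization $\nabla f(a)\succeq \nabla f(b)+\mu(b-a)$ for $a\preceq b$, with explicit sign bookkeeping in both cases. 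What your route buys is generality that actually matches the lemma's hypothesis: the statement assumes $f$ is merely differentiable, and your argument needs nothing more, whereas the paper's proof implicitly requires $f$ to be twice differentiable (and glosses over the fact that the mean value theorem must be applied to the scalar map $s\mapsto\langle\nabla f(x+sv),v\rangle$ rather than to the vector-valued gradient). What the paper's route buys is that the entrywise Hessian structure is made visible, which is the form of strong DR-submodularity most easily checked in the paper's examples. One small remark on your alternative sketch: the auxiliary fact you flag there --- concavity of $g=f+\frac{\mu}{2}\|\cdot\|^2$ along non-positive directions --- does not actually need the Hessian condition $v^T\nabla^2 g\,v\leq 0$; it follows from gradient order-reversal alone, since for $v\preceq 0$ one has $\nabla g(x+tv)\succeq \nabla g(x)$, hence $\langle \nabla g(x+tv)-\nabla g(x),v\rangle\leq 0$, and integrating gives $g(x+v)\leq g(x)+\langle\nabla g(x),v\rangle$; so even that variant can be kept first-order.
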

\citet{bian2020continuous} defined $\mu$-strongly DR-submodular functions to be the class of functions that are $\mu$-strongly concave along non-negative directions. Note that our definition of $\mu$-strong DR-submodularity is a stronger condition than $\mu$-strong concavity along non-negative directions. For instance, $\mu$-strongly concave functions are $\mu$-strongly concave along any direction, but they may not even be DR-submodular.\\
\textbf{Smooth functions.} A differentiable function $f:\K \rightarrow \mathbb{R}$, $\K\subset \mathbb{R}_+^n$, is called $L$-smooth if for all $x,y\in \K$, we have
\begin{equation*}
    f(y)\geq f(x)+\langle \nabla f(x),y-x\rangle-\frac{L}{2}\|y-x\|^2.
\end{equation*}
If $f$ is twice differentiable, there is an equivalent definition of smoothness: $f$ is $L$-smooth if $\nabla^2 f(x)\succeq -LI$ holds for all $x\in \K$ where $I$ is the identity matrix. In other words, the smallest eigenvalue of the Hessian of $f$ is uniformly lower bounded by $-L$ everywhere. Combining the definition of smooth functions and the result of Lemma \ref{lemma1}, it is clear that for a $\mu$-strongly DR-submodular and $L$-smooth function, $\mu\leq L$ holds.\\ 
\textbf{Curvature.} We define the notion of curvature for monotone continuous functions below.
\begin{defn}\label{def:2}
Given a monotone differentiable function $f:\K \rightarrow \mathbb{R}$, $\K\subset \mathbb{R}_+^n$, we define the curvature of $f$ as follows:
\begin{equation*}
    c_f=1-\inf_{x,y\in \K,i\in[n]}\frac{\nabla_if(y)}{\nabla_if(x)}.
\end{equation*}
If $f$ is DR-submodular and $0\in \K$, we have $c_f=1-\inf_{x\in \K,i\in[n]}\frac{\nabla_i f(x)}{\nabla_i f(0)}$.
\end{defn}
It is easy to see that $c_f\in[0,1]$ holds for all monotone $f$. $c_f\leq 1$ is due to monotonocity of $f$ (i.e., $\nabla f$ being non-negative) and $c_f\geq 0$ follows from setting $x=y$ in the definition. If $c_f=0$, $f$ is linear and larger $c_f$ corresponds to $f$ being more curved.\\
A similar notion of curvature was introduced in \citet{sessa2019bounding}. The definition is inspired by the curvature of submodular set functions \citep{conforti1984submodular}. In fact, if $f$ is the multilinear extension of a monotone submodular set function $F$, $c_f$ coincides with the curvature of $F$ \citep{sadeghi2021differentially}. Submodular set function maximization with bounded curvature has been widely studied in the literature. \citet{conforti1984submodular} showed that the greedy algorithm applied to the monotone submodular set function maximization problem subject to a cardinality constraint has a $\frac{1}{\kappa}(1-e^{-\kappa})$ approximation ratio, where $\kappa$ is the curvature of the set function. More recently, \citet{sviridenko2017optimal} proposed two approximation algorithms for the more general problem of monotone submodular maximization subject to a matroid constraint and obtained a $1-\frac{\kappa}{e}$ approximation ratio for these two algorithms. They also provided matching upper bounds for this problem showing that the $1-\frac{\kappa}{e}$ approximation ratio is indeed optimal. Later on, \citet{feldman2021guess} managed to obtain the same $1-\frac{\kappa}{e}$ approximation ratio with an algorithm that is much faster than the ones proposed by \citet{sviridenko2017optimal}.
\section{Strongly DR-submodular Frank-Wolfe (SDRFW) algorithm}\label{sec:3}
In this section, we propose the SDRFW algorithm for strongly DR-submodular maximization with bounded curvature. Throughout the section, we make the further assumption that the domain set $\K$ contains the origin, i.e., $0\in \K$. Furthermore, we consider $f$ to be a monotone, $\mu$-strongly DR-submodular and $L$-smooth function with curvature $c_f\in[0,1]$. Without loss of generality, we also assume that $f$ is normalized, i.e., $f(0)=0$. For the DR-submodular setting ($\mu=0$), \citet{bian2017guaranteed} proposed a Frank-Wolfe variant for solving the problem. Starting from $x_0=0$, their algorithm performs $K$ Frank-Wolfe updates where at each iteration $k\in\{0,\dots,K-1\}$, it finds $v_k$ such that $v_k=\argmax_{x\in \K}\langle x,\nabla f(x_k)\rangle$, performs the update $x_{k+1}=x_{k}+\frac{1}{K}v_k$ and outputs $x_{K}$.\\
Define $g(x)=f(x)-\ell^Tx$ where for all $i\in[n]$, $\ell_i=\min_x \nabla_i f(x)$. Note that similar to $f$, $g$ is also a normalized, monotone, $\mu$-strongly DR-submodular and $L$-smooth function.\\
The SDRFW algorithm is presented in Algorithm \ref{alg1}. First, note that the output of the algorithm ($x=x_{K}$) is the average of $K$ points $\{v_k\}_{k=0}^{K-1}$ in the convex domain set $\K$, and therefore, $x\in \K$. Also, it is noteworthy that the update rule for $\{v_k\}_{k=1}^K$ can be computed efficiently in many cases. To see this, for all $k\in\{0,\dots,K-1\}$, we can equivalently write:
\begin{equation*}
    v_k=\text{Proj}_{\K}\big(\frac{1}{\mu}\nabla g(x_k)+\frac{1}{\mu (1-\frac{1}{K})^{K-k-1}}\ell\big).
\end{equation*}
In many cases, such projection could be computed in linear time $\O(n)$ \citep{brucker1984n,pardalos1990algorithm}, e.g., for $\K=\{x\in \R^n:1^Tx\leq 1,0\preceq x\preceq 1\}$.\\
Algorithm \ref{alg1} is different from the Frank-Wolfe variant of \citet{bian2017guaranteed} in two important aspects: 1) At step $k\in\{0,\dots,K-1\}$, Algorithm \ref{alg1} is applied to the function $(1-\frac{1}{K})^{K-k-1}g(\cdot)+\langle \ell,\cdot\rangle$ (instead of $f(\cdot)=g(\cdot)+\langle \ell,\cdot \rangle$), 2) The linear maximization step for computing $\{v_k\}_{k=0}^{K-1}$ in the Frank-Wolfe variant of \citet{bian2017guaranteed} is replaced by a strongly concave maximization problem. Modification 1 is inspired by a similar idea in \citet{feldman2021guess} where they provided an algorithm for the setting where the objective function is the multilinear extension of a submodular set function ($\mu=0$) and obtained a $1-\frac{c_f}{e}$ approximation ratio for the problem. They also proved matching negative results showing that no polynomial time algorithm can perform better in terms of the approximation ratio. The same upper bound applies to our framework as well. However, the additional strong DR-submodularity of $f$ allows for a faster convergence to $(1-\frac{c_f}{e})\text{OPT}$. We provide the approximation guarantee of Algorithm \ref{alg1} below.
\begin{algorithm}[t]
    \caption{Strongly DR-submodular Frank-Wolfe (SDRFW)}
	\begin{algorithmic}\label{alg1}
		\STATE \textbf{Input}: Convex set $\K$ with $0\in \K$, $L$-smooth and $\mu$-strongly DR-submodular $f$, $g(\cdot)=f(\cdot)-\langle \ell, \cdot\rangle$ where for all $i\in[n]$, $\ell_i=\min_x \nabla_i f(x)$, $K>0$.
		\STATE Set $x_0=0$.
		\FOR{$k=0$ {\bfseries to} $K-1$}
		\STATE Set $v_k=\argmax_{x\in \K}\langle (1-\frac{1}{K})^{K-k-1}\nabla g(x_k)+\ell,x\rangle -\frac{\mu (1-\frac{1}{K})^{K-k-1}}{2}\|x\|^2$.
		\STATE $x_{k+1}=x_k+\frac{1}{K}v_k$.
		\ENDFOR
		\STATE \textbf{Output}: $x=x_{K}$.
	\end{algorithmic}
\end{algorithm}
\begin{theorem}\label{thm1}
Let $f:\K \rightarrow \mathbb{R}$, $\K\subset \mathbb{R}_+^n$ and $0\in \K$, be a normalized, monotone, $\mu$-strongly DR-submodular and $L$-smooth function. If we set $K=\lceil \frac{L}{\mu}\rceil$, the output of Algorithm \ref{alg1} has the following performance guarantee:
\begin{equation*}
    f(x)\geq (1-\frac{c_f}{e})f(x^*),
\end{equation*}
where $x^*=\argmax_{x\in \K}f(x)$.
\end{theorem}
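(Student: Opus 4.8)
My plan is to reduce the $1-\frac{c_f}{e}$ guarantee to a clean statement about the two pieces of the decomposition $f=g+\langle\ell,\cdot\rangle$, then prove a per-iteration progress inequality that exploits the strongly concave subproblem, and finally telescope. First I would record the structural facts about $g$: it is normalized, monotone, $\mu$-strongly DR-submodular, $L$-smooth, with $\nabla g\succeq 0$ and $\ell\succeq 0$. The key quantitative input is a \emph{curvature bound} $g(x^*)\le c_f f(x^*)$. To get it, note that DR-submodularity with $0\in\K$ gives $\nabla_i f(x)\ge(1-c_f)\nabla_i f(0)$ for all $x,i$, hence $\ell_i=\min_x\nabla_i f(x)\ge(1-c_f)\nabla_i f(0)$; combining with concavity of $f$ along the non-negative direction $0\to x^*$ (so $f(x^*)\le\langle\nabla f(0),x^*\rangle$) yields $\langle\ell,x^*\rangle\ge(1-c_f)f(x^*)$, and therefore $g(x^*)=f(x^*)-\langle\ell,x^*\rangle\le c_f f(x^*)$. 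With this, the theorem reduces to the \emph{uncurved} estimate $f(x_K)\ge\big(1-(1-\tfrac1K)^K\big)g(x^*)+\langle\ell,x^*\rangle$: since $g(x^*)\ge 0$ and $(1-\tfrac1K)^K\le\tfrac1e$, this gives $f(x_K)\ge f(x^*)-\tfrac1e g(x^*)\ge(1-\tfrac{c_f}{e})f(x^*)$.

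Writing $\gamma_k=(1-\tfrac1K)^{K-1-k}$, for each step I would lower bound the true increase of $f$ by smoothness, $f(x_{k+1})\ge f(x_k)+\tfrac1K\langle\nabla f(x_k),v_k\rangle-\tfrac{L}{2K^2}\|v_k\|^2$, and then control $\langle\nabla f(x_k),v_k\rangle=\langle\nabla g(x_k)+\ell,v_k\rangle$ through the optimality of $v_k$ for the strongly concave subproblem. Because $\gamma_k\le 1$ and $\nabla g(x_k),v_k\succeq 0$, I can pass to the subproblem objective $\langle\gamma_k\nabla g(x_k)+\ell,\cdot\rangle-\tfrac{\mu\gamma_k}{2}\|\cdot\|^2$; testing its optimality against $x^*$ (and, via $0\in\K$, using its first-order condition, which yields $\gamma_k\langle\nabla g(x_k),v_k\rangle+\langle\ell,v_k\rangle\ge\mu\gamma_k\|v_k\|^2$) produces the curved term $\gamma_k\langle\nabla g(x_k),x^*\rangle$, the \emph{full} linear term $\langle\ell,x^*\rangle$, and a quadratic gain of order $\mu\gamma_k\|v_k\|^2$. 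Bounding $\langle\nabla g(x_k),x^*\rangle\ge g(x^*)-g(x_k)$ by concavity and monotonicity of $g$ along $x_k\to x^*\vee x_k$, the per-step bound I aim to establish is $f(x_{k+1})-f(x_k)\ge\tfrac1K\big(\gamma_k(g(x^*)-g(x_k))+\langle\ell,x^*\rangle\big)$ up to the quadratic remainder, and here the choice $K=\lceil\tfrac{L}{\mu}\rceil$, i.e. $\tfrac{L}{K}\le\mu$, is exactly what lets the strong-concavity gain absorb the $\tfrac{L}{2K^2}\|v_k\|^2$ discretization error.

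The weights are engineered so that summing works: since $\sum_{k=0}^{K-1}\tfrac{\gamma_k}{K}=1-(1-\tfrac1K)^K$ while $\sum_{k=0}^{K-1}\tfrac1K=1$, the curved optimum $g(x^*)$ accrues total weight $1-(1-\tfrac1K)^K$ and the linear part $\langle\ell,x^*\rangle$ accrues full weight $1$, matching the reduced goal above. I expect the main obstacle to be precisely the bookkeeping that makes the telescoping \emph{exact} rather than merely asymptotic: simultaneously handling the $-\gamma_k g(x_k)$ feedback and the quadratic remainder so that the accumulated error is genuinely non-positive over the \emph{finite} horizon $K=\lceil\tfrac{L}{\mu}\rceil$. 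This is where strong DR-submodularity, through the strongly concave subproblem and Lemma~\ref{lemma1}, must be used sharply; by comparison the curvature bound and the elementary estimate $(1-\tfrac1K)^K\le\tfrac1e$ are routine.
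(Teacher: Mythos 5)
Your overall architecture is the paper's: the same decomposition $f=g+\langle\ell,\cdot\rangle$, the same test of the subproblem's optimality against $x^*$, the same weight calculus $\frac{1}{K}\sum_{k=0}^{K-1}\gamma_k=1-(1-\frac{1}{K})^K$, and your curvature step $\langle\ell,x^*\rangle\geq(1-c_f)f(x^*)$ (via concavity along non-negative directions at $0$, where the paper uses the mean value theorem) is a correct, equally routine variant. However, the per-iteration inequality you ``aim to establish'' does not follow from the steps you list, for two concrete reasons. First, comparing the subproblem value at $v_k$ with its value at $x^*$ produces not only the gain $\frac{\mu\gamma_k}{2}\|v_k\|^2$ but also the loss $-\frac{\mu\gamma_k}{2}\|x^*\|^2$, which your accounting silently drops. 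Your concavity bound $\langle\nabla g(x_k),x^*\rangle\geq g(x^*)-g(x_k)$, taken along $x_k\to x^*\vee x_k$, cannot repair this: even its strong form only yields $+\frac{\mu}{2}\|x^*\vee x_k-x_k\|^2$, which can be far smaller than $\frac{\mu}{2}\|x^*\|^2$ (it vanishes when $x_k\succeq x^*$). The paper instead goes through $x_k+x^*$: monotonicity gives $g(x^*)\leq g(x_k+x^*)$, and Lemma \ref{lemma1} along the direction $x^*$ gives $g(x_k+x^*)\leq g(x_k)+\langle\nabla g(x_k),x^*\rangle-\frac{\mu}{2}\|x^*\|^2$, so that $\langle\nabla g(x_k),x^*\rangle\geq g(x^*)-g(x_k)+\frac{\mu}{2}\|x^*\|^2$ and the loss cancels exactly; without it, a deficit of order $\mu\|x^*\|^2$ survives into the final bound.

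Second, there is a discount mismatch in the quadratic terms. You apply smoothness to $f$ undiscounted (error $\frac{L}{2K^2}\|v_k\|^2$) and then weaken $\langle\nabla g(x_k),v_k\rangle$ to $\gamma_k\langle\nabla g(x_k),v_k\rangle$, so the strong-concavity gain you extract is $\frac{\mu\gamma_k}{2K}\|v_k\|^2$. Absorption then requires $\mu\gamma_k\geq\frac{L}{K}$, but $\gamma_k=(1-\frac{1}{K})^{K-1-k}$ can be as small as roughly $\frac{1}{e}$, so $K=\lceil\frac{L}{\mu}\rceil$ is not enough: take $K=2$, $L=2\mu$, $k=0$, where $\mu\gamma_0=\frac{\mu}{2}<\frac{L}{K}=\mu$. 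On your accounting you would need $K\gtrsim\frac{eL}{\mu}$, and the slack $(1-\gamma_k)\langle\nabla g(x_k),v_k\rangle$ you discarded has no uniform lower bound (entries of $\nabla g$ can vanish, since $\ell$ was subtracted precisely to make $\min_x\nabla_i g(x)=0$), so it cannot be invoked to close the gap. This problem and the $-\gamma_k g(x_k)$ feedback you flag (which does not vanish under naive summation, since $g(x_k)\geq 0$) are resolved in the paper by a single device: the potential $\phi(k)=(1-\frac{1}{K})^{K-k}g(x_k)+\langle\ell,x_k\rangle$. Expanding $\phi(k+1)-\phi(k)$ multiplies the entire smoothness inequality for $g$ by $\gamma_k$, so the discretization error also carries the factor $\gamma_k$ and $\frac{\mu}{2}-\frac{L}{2K}\geq 0$ suffices; meanwhile the term $\frac{\gamma_k}{K}g(x_k)$ generated by the shrinking discount exactly cancels the $-\gamma_k g(x_k)$ feedback, making the telescoping exact with $\phi(0)=0$ and $\phi(K)=f(x_K)$. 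Until you replace your plain-$f$ per-step bound with this discounted recursion (or an equivalent), the proof has a genuine gap at precisely the point you identified as the ``main obstacle.''
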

In comparison, the Frank-Wolfe variant of \citet{bian2017guaranteed} obtains the approximation guarantee $f(x)\geq (1-\frac{1}{e})f(x^*)-\frac{LR^2}{2K}$ where $R=\max_{x,y\in \K}\|y-x\|$ is the diameter of $\K$. Therefore, Algorithm \ref{alg1} has an improved approximation ratio for objective functions $f$ with curvature $c_f<1$ and its guarantee does not deteriorate as the diameter of $\K$ becomes larger. Intuitively, for $K$ chosen large enough (i.e., $K\geq \frac{L}{\mu}$) in the analysis of Algorithm \ref{alg1}, the $-\frac{LR^2}{2K}$ term is cancelled with a positive term resulting from $\mu$-strong DR-submodularity of the objective function.\\
We need to know the smoothness parameter $L$ and the strong DR-submodularity parameter $\mu$ to set $K$ in Algorithm \ref{alg1}. For $\mu$-strongly DR-submodular functions, if we run Algorithm \ref{alg1} with $\hat{\mu}$ instead of $\mu$, the algorithm obtains the same guarantee as long as $\hat{\mu}\leq \mu$. In order to compute such a lower bound, one can investigate the diagonal entries of the Hessian matrix. In Section \ref{sec:5}, we show how to compute $L$ efficiently using convex optimization tools.
\section{Projected Gradient Ascent (PGA) algorithm}\label{sec:4}
In this section, we study the well-known Projected Gradient Ascent (PGA) method for strongly DR-submodular maximization with bounded curvature. The PGA algorithm is provided in Algorithm \ref{alg2} \citep{nesterov2003introductory}. Given an initial point $x_1\in \K$, PGA iteratively applies the update $x_{k+1}=\text{Proj}_{\K}\big(x_k+\frac{1}{L}\nabla f(x_k)\big)$. In other words, at each iteration $k\in[K]$, the current iterate $x_k$ is updated by adding $\frac{1}{L}\nabla f(x_k)$ and the resulting point is then projected back to the constraint set $\K$. The algorithm outputs the final iterate $x=x_{K+1}$. Unlike the SDRFW algorithm, PGA does not require to start from the origin and for any feasible initial point $x_1\in \K$, PGA still converges to a competitive solution. However, as we will soon see in the result of Theorem \ref{thm2}, the rate of convergence depends on the distance between the initial point $x_1$ and the optimal point $x^*$.\\
We first provide a key lemma below that is used in the analysis of Algorithm \ref{alg2}.
\begin{lemma}\label{lemma2}
For any $x,z\in \K$, if $f$ is a non-negative monotone $\mu$-strongly DR-submodular function with curvature $c_f$, we have:
\begin{equation*}
f(z)-(1+c_f)f(x)\leq \langle \nabla f(x),z-x\rangle -\frac{\mu}{2}\|z-x\|^2.    
\end{equation*}
\end{lemma}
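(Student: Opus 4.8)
The plan is to reach $z$ from $x$ along two monotone segments and to decompose the displacement into its non-negative and non-positive parts, so that Lemma \ref{lemma1} can be applied on each segment. Concretely, set $p=(x\vee z)-x\succeq 0$ and $q=(x\vee z)-z\succeq 0$. Because $p$ and $q$ have disjoint supports, one has $z-x=p-q$ and $\|z-x\|^2=\|p\|^2+\|q\|^2$. This last identity is exactly what will merge the two separate strong-concavity penalties into the single term $-\frac{\mu}{2}\|z-x\|^2$ appearing on the right-hand side of the claim.

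First I would climb from $x$ to $x\vee z$ along the non-negative direction $p$ and apply Lemma \ref{lemma1} with $v=p$, giving $f(x\vee z)\leq f(x)+\langle \nabla f(x),p\rangle-\frac{\mu}{2}\|p\|^2$. Then I would descend from $x\vee z$ to $z$ along the non-positive direction $-q$ and apply Lemma \ref{lemma1} again (base point $x\vee z$, $v=-q$), which after rearranging yields $f(x\vee z)-f(z)\geq \langle \nabla f(x\vee z),q\rangle+\frac{\mu}{2}\|q\|^2$. Here the gradient is evaluated at $x\vee z$ rather than at $x$, so I would bring in the curvature bound of Definition \ref{def:2}: since $q\succeq 0$ and $\nabla_i f(x\vee z)\geq (1-c_f)\nabla_i f(x)$, we get $\langle \nabla f(x\vee z),q\rangle\geq (1-c_f)\langle \nabla f(x),q\rangle$. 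Substituting the first display into the second produces
\begin{equation*}
f(z)\leq f(x)+\langle \nabla f(x),p\rangle-(1-c_f)\langle \nabla f(x),q\rangle-\tfrac{\mu}{2}\|z-x\|^2 .
\end{equation*}

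It then remains to rewrite $\langle \nabla f(x),p\rangle-(1-c_f)\langle \nabla f(x),q\rangle=\langle \nabla f(x),z-x\rangle+c_f\langle \nabla f(x),q\rangle$ and to absorb the leftover curvature term into $c_f f(x)$. The main obstacle is precisely this last bound, $\langle \nabla f(x),q\rangle\leq f(x)$. I would obtain it by writing $q=x-(x\wedge z)$ with $x\wedge z\preceq x$ and invoking concavity of $f$ along the non-negative direction $x-(x\wedge z)$ in its tangent-at-$x$ form, $f(x)-f(x\wedge z)\geq \langle \nabla f(x),x-(x\wedge z)\rangle=\langle \nabla f(x),q\rangle$; non-negativity of $f$ gives $f(x\wedge z)\geq 0$, hence $\langle \nabla f(x),q\rangle\leq f(x)$, and since $c_f\geq 0$ this controls the residual term. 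Combining everything yields $f(z)-(1+c_f)f(x)\leq \langle \nabla f(x),z-x\rangle-\frac{\mu}{2}\|z-x\|^2$, as claimed. The delicate points are orienting the curvature inequality correctly (it must \emph{lower}-bound $\nabla f(x\vee z)$ by $(1-c_f)\nabla f(x)$, which is the direction needed after the descent step), keeping track that monotonicity supplies $\nabla f\succeq 0$ throughout while non-negativity is used only in the final step, and that the DR-submodularity and curvature properties are applied at the lattice point $x\vee z$, i.e.\ on the box spanned by $\K$ on which $f$ is defined.
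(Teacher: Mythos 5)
Your proof is correct, but it takes a genuinely different route from the paper's. The paper anchors \emph{both} strong-concavity inequalities of Lemma \ref{lemma1} at the same point $x$ (one toward $u=x\vee z$, one toward $w=x\wedge z$), sums them using $u+w=x+z$, and then handles the curvature via the mean value theorem in integral form: it compares $f(u)-f(z)$ and $f(w)-f(x)$ by bounding $\nabla_i f\big(z+t(u-z)\big)\geq (1-c_f)\nabla_i f\big(w+t(x-w)\big)$ pointwise in $t$ along two parallel segments, arriving at $f(u)+f(w)\geq f(z)+(1-c_f)f(x)+c_f f(w)$ before dropping $c_f f(w)\geq 0$. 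You instead chain inequalities sequentially along the path $x\to x\vee z\to z$, apply the curvature bound of Definition \ref{def:2} only at the single point $x\vee z$ (namely $\nabla f(x\vee z)\succeq (1-c_f)\nabla f(x)$, valid since monotonicity gives $\nabla f\succeq 0$ and $q\succeq 0$), and then need one extra application of concavity along the non-positive direction $(x\wedge z)-x$ to convert the residual $c_f\langle \nabla f(x),q\rangle$ into $c_f f(x)$, using $f(x\wedge z)\geq 0$. The trade is clear: your argument avoids the integral/MVT comparison entirely and is more elementary, at the cost of a third invocation of Lemma \ref{lemma1}; it even retains a slightly stronger intermediate inequality (an extra $-\frac{c_f\mu}{2}\|q\|^2$ term that both proofs ultimately discard, along with the $c_f f(x\wedge z)$ slack common to both). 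One shared caveat, which you correctly flag and which the paper's proof incurs as well: both arguments evaluate $f$, $\nabla f$, and the curvature ratio at lattice points such as $x\vee z$ and along segments that may leave $\K$, so both implicitly assume $f$ and the infimum in Definition \ref{def:2} extend to the box spanned by $\K$; this is not a gap specific to your proof.
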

\begin{proof}
Let $u=x\vee z$ and $w=x\wedge z$. Using the $\mu$-strong DR-submodularity property of $f$, we can write:
\begin{align*}
    f(u)-f(x)&\leq \langle \nabla f(x),u-x\rangle-\frac{\mu}{2}\|u-x\|^2,\\
    f(w)-f(x)&\leq \langle \nabla f(x),w-x\rangle -\frac{\mu}{2}\|w-x\|^2.
\end{align*}
Taking the sum of the two inequalities and using the fact that $u + w = x + z$, we have:
\begin{equation}\label{eq:22}
f(u)+f(w)-2f(x)\leq \langle \nabla f(x),z-x\rangle -\frac{\mu}{2}\|z-x\|^2.
\end{equation}
Using the mean value theorem, we can write:
\begin{align*}
    f(u)-f(z)&=\int_0^1 \langle u-z, \nabla f\big(z+t(u-z)\big)\rangle dt,\\
    f(w)-f(x)&=\int_0^1 \langle w-x, \nabla f\big(w+t(x-w)\big)\rangle dt.
\end{align*}
Given that $x-w=u-z$ and $\nabla_i f\big(z+t(u-z)\big)\geq (1-c_f)\nabla_i f\big(w+t(x-w)\big)$ holds for all $i\in[n]$, we can bound the first inequality as follows:
\begin{align*}
    f(u)-f(z)&\geq-(1-c_f)\int_0^1 \langle w-x, \nabla f\big(w+t(x-w)\big)\rangle dt\\
    &=-(1-c_f)\big(f(w)-f(x)\big).
\end{align*}
Equivalently, we can write:
\begin{equation}\label{eq:33}
f(u)+f(w)\geq f(z)+(1-c_f)f(x)+c_ff(w).    
\end{equation}
Combining the inequalities \ref{eq:22} and \ref{eq:33}, we conclude:
\begin{equation*}
    f(z)-(1+c_f)f(x)+c_ff(w)\leq \langle \nabla f(x),z-x\rangle -\frac{\mu}{2}\|z-x\|^2.
\end{equation*}
Given that $c_f\in [0,1]$ and $f$ is non-negative, we can drop the term $c_ff(w)$ and derive the result as stated.
\end{proof}
We can now exploit the result of Lemma \ref{lemma2} to obtain the approximation guarantee of the PGA algorithm.
\begin{theorem}\label{thm2}
If $f$ is a monotone, $\mu$-strongly DR-submodular and $L$-smooth function, PGA obtains the following approximation guarantee:
\begin{equation*}
f(x)\geq \frac{1}{1+c_f}f(x^*)-\frac{e^{-\mu K/L}}{1+c_f}\big(f(x^*)-(1+c_f)f(x_1)\big).
\end{equation*}
Moreover, for the DR-submodular setting ($\mu=0$), the utility of the output of the PGA algorithm is bounded as follows:
\begin{equation*}
    f(x)\geq \frac{1}{1+c_f}f(x^*)-\frac{L}{2K(1+c_f)}\|x_1-x^*\|^2.
\end{equation*}
\end{theorem}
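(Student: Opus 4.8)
The plan is to track the potential $\Phi_k=f(x^*)-(1+c_f)f(x_k)$ and show it decreases geometrically, since the claimed bound is exactly the statement $\Phi_{K+1}\le e^{-\mu K/L}\Phi_1$ (resp. $\Phi_{K+1}\le \tfrac{L}{2K}\|x_1-x^*\|^2$ when $\mu=0$) after rearranging. The starting point I would use is that one PGA step is itself the maximization of a quadratic surrogate: since $x_{k+1}=\text{Proj}_{\K}(x_k+\tfrac1L\nabla f(x_k))=\argmax_{x\in\K}\{\langle\nabla f(x_k),x-x_k\rangle-\tfrac L2\|x-x_k\|^2\}$, $L$-smoothness gives $f(x_{k+1})\ge f(x_k)+\max_{x\in\K}\{\langle\nabla f(x_k),x-x_k\rangle-\tfrac L2\|x-x_k\|^2\}$. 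Taking $x=x_k$ inside the max shows in particular that PGA is monotone, $f(x_{k+1})\ge f(x_k)$, so $\Phi_k$ is nonincreasing in $k$; I will lean on this repeatedly for sign bookkeeping.

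Next I would lower bound the surrogate maximum by evaluating it at the feasible point $x_k+\eta(x^*-x_k)$ for $\eta\in[0,1]$ (feasible by convexity of $\K$), obtaining $f(x_{k+1})\ge f(x_k)+\eta\langle\nabla f(x_k),x^*-x_k\rangle-\tfrac{L\eta^2}{2}\|x^*-x_k\|^2$. I then invoke Lemma \ref{lemma2} with $z=x^*$, which supplies $\langle\nabla f(x_k),x^*-x_k\rangle\ge \Phi_k+\tfrac\mu2\|x^*-x_k\|^2$. Substituting gives the single master inequality
\[ f(x_{k+1})-f(x_k)\ \ge\ \eta\,\Phi_k+\tfrac{\eta}{2}\,(\mu-L\eta)\,\|x^*-x_k\|^2 . \]

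For the strongly DR-submodular case $\mu>0$, the crucial trick is the choice $\eta=\mu/L\in(0,1]$ (legitimate since $\mu\le L$), which annihilates the troublesome distance term and leaves $f(x_{k+1})-f(x_k)\ge \tfrac\mu L\Phi_k$. Combining this with the identity $\Phi_{k+1}=\Phi_k-(1+c_f)\big(f(x_{k+1})-f(x_k)\big)$ and $1+c_f\ge 1$ yields the clean recursion $\Phi_{k+1}\le(1-\tfrac\mu L)\Phi_k$; when $\Phi_k<0$ the same inequality still holds because monotonicity forces $\Phi_{k+1}\le\Phi_k$. Iterating over the $K$ steps and using $1-t\le e^{-t}$ gives $\Phi_{K+1}\le(1-\tfrac\mu L)^K\Phi_1\le e^{-\mu K/L}\Phi_1$, which is the first claim.

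The case $\mu=0$ is where a separate idea is needed and is the main obstacle, since $\eta=\mu/L=0$ is degenerate and the distance term can no longer be cancelled by choosing $\eta$. Here I would instead retain the distance term but convert it into a telescoping quantity using the projection variational inequality $\langle\nabla f(x_k),x^*-x_{k+1}\rangle\le L\langle x_{k+1}-x_k,x^*-x_{k+1}\rangle$ together with the three-point identity $2\langle x_{k+1}-x_k,x^*-x_{k+1}\rangle=\|x^*-x_k\|^2-\|x^*-x_{k+1}\|^2-\|x_{k+1}-x_k\|^2$. Feeding these (plus smoothness along $x_k\to x_{k+1}$ and Lemma \ref{lemma2}) into the bound on $\Phi_{k+1}$, the $\|x_{k+1}-x_k\|^2$ terms cancel and produce $\Phi_{k+1}\le\tfrac L2\|x^*-x_k\|^2-\tfrac L2\|x^*-x_{k+1}\|^2$; summing over $k=1,\dots,K$ telescopes the right-hand side to at most $\tfrac L2\|x^*-x_1\|^2$, and since $\Phi_k$ is nonincreasing the sum is at least $K\Phi_{K+1}$. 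Dividing by $K$ and rearranging gives the stated $\tfrac{L}{2K(1+c_f)}\|x_1-x^*\|^2$ bound. I expect the only genuinely delicate points to be the cancellation choice $\eta=\mu/L$ and the sign handling of $\Phi_k$ via monotonicity; the remaining ingredients are a routine assembly of smoothness, the projection inequality, and Lemma \ref{lemma2}.
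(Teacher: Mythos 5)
Your proposal is correct and follows essentially the same route as the paper's proof: your master inequality with the choice $\eta=\mu/L$ is exactly the paper's argument in disguise, since the paper bounds the projection residual by evaluating at the feasible point $\frac{\mu}{L}x^*+(1-\frac{\mu}{L})x_k = x_k+\frac{\mu}{L}(x^*-x_k)$ (its inequalities combining smoothness, Lemma \ref{lemma2}, and convexity of $\K$), and for $\mu=0$ your projection variational inequality plus the three-point identity reproduces the paper's strong-concavity-of-the-quadratic-surrogate telescoping, with the ascent property $f(x_{k+1})\geq f(x_k)$ (the paper's Lemma \ref{lemma3}) handling the sign bookkeeping in both. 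Your packaging---a single substituted inequality for $\mu>0$ and a potential $\Phi_k$ throughout---is slightly cleaner, but the decomposition, key lemma, and cancellations are identical to the paper's.
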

\begin{algorithm}[t]
	\caption{Projected Gradient Ascent (PGA)}
	\begin{algorithmic}\label{alg2}
		\STATE \textbf{Input}: Convex set $\K$, $x_1\in \K$, $L$-smooth and $\mu$-strongly DR-submodular $f$, $K>0$.
		\FOR{$k=1$ {\bfseries to} $K$}
		\STATE Set $x_{k+1}=\text{Proj}_{\K}\big(x_k+\frac{1}{L}\nabla f(x_k)\big)$.
		\ENDFOR
		\STATE \textbf{Output}: $x=x_{K+1}$.
	\end{algorithmic}
\end{algorithm}
\citet{hassani2017gradient} analyzed the PGA method in the DR-submodular setting ($\mu=0$) and proved that $f(x)\geq \frac{1}{2}f(x^*)-\frac{LR^2}{2K}$. In comparison, thanks to Lemma \ref{lemma2}, we obtain an improved $\frac{1}{1+c_f}$ approximation ratio with a similar sub-linear convergence rate in the DR-submodular setting. Moreover, if $f$ is $\mu$-strongly DR-submodular, Theorem \ref{thm2} shows that the $\frac{1}{1+c_f}$ approximation ratio could be achieved at a faster linear convergence rate. Furthermore, if $c_f<\frac{1}{e-1}\approx 0.58$, the $\frac{1}{1+c_f}$ approximation ratio obtained in Theorem \ref{thm2} is larger than the $1-\frac{1}{e}$ approximation ratio guaranteed by the Frank-Wolfe variant of \citet{bian2017guaranteed}. However, $\frac{1}{1+c_f}\leq 1-\frac{c_f}{e}$ always holds, i.e., the approximation ratio of the SDRFW algorithm is greater than that of the PGA algorithm.
\subsection{Online setting}\label{sec:4-1}
We can also use Lemma \ref{lemma2} to obtain improved regret bounds in the online setting for the Online Gradient Ascent (OGA) algorithm. To be precise, consider the following online optimization protocol: A convex constraint set $\K$ with diameter $R$ is given. At each iteration $t\in[T]$, the online algorithm first chooses an action $x_t\in \K$. Upon committing to this choice, a monotone (strongly) DR-submodular function $f_t:\K \rightarrow \mathbb{R}$, $\K\subset \mathbb{R}_+^n$, is revealed and the algorithm receives the reward $f_t(x_t)$. The goal is to maximize the total obtained reward or equivalently minimize the $\alpha$-regret $\alpha\text{-}R_T=\alpha \max_{x\in \K}\sum_{t=1}^T f_t(x)-\sum_{t=1}^Tf_t(x_t)$, i.e., the difference between the cumulative reward of the algorithm and the $\alpha$ approximation to that of the best fixed decision in hindsight where $\alpha \in (0,1]$.\\
The Online Gradient Ascent (OGA) algorithm is provided in Algorithm \ref{alg3}. OGA is the online counterpart of the PGA algorithm for the offline setting. Starting from an arbitrary initial point $x_1\in \K$, for all $t\in[T-1]$, OGA uses the update $x_{t+1}=\text{Proj}_{\K}\big(x_t+\eta_t\nabla f_t(x_t)\big)$ to obtain the next iterate $x_{t+1}$, where $\eta_t>0$ is the step size. \citet{pmlr-v84-chen18f} analyzed the OGA algorithm in the DR-submodular setting ($\mu=0$) and provided $\O(\sqrt{T})$ bounds for the $\frac{1}{2}$-regret of the algorithm. Using Lemma \ref{lemma2}, we can obtain improved $\O(\sqrt{T})$ and $\O(\ln T)$ $(\frac{1}{1+c})$-regret bounds in the DR-submodular and strongly DR-submodular settings respectively where $c=\max_{t\in[T]}c_{f_t}$. This result is stated in the theorem below. 
\begin{theorem}\label{thm3}
Assume that the functions $\{f_t\}_{t=1}^T$ are all monotone, $\beta$-Lipschitz and $\mu$-strongly DR-submodular. If for all $t\in [T]$, we set $\eta_t=\frac{1}{\mu t}$, the OGA algorithm has the following $(\frac{1}{1+c})$-regret bound.
\begin{equation*}
    \frac{1}{1+c}\sum_{t=1}^Tf_t(x^*)-\sum_{t=1}^Tf_t(x_t)\leq \frac{\beta^2}{2\mu (1+c)}(1+\ln T),
\end{equation*}
where $x^*=\argmax_{x\in \K}\sum_{t=1}^Tf_t(x)$. Moreover, in the DR-submodular setting ($\mu=0$), if we set $\eta_t=\eta=\frac{R}{\beta \sqrt{T}}~\forall t\in[T]$, the $(\frac{1}{1+c})$-regret of the algorithm is bounded as follows:
\begin{equation*}
    \frac{1}{1+c}\sum_{t=1}^Tf_t(x^*)-\sum_{t=1}^Tf_t(x_t)\leq \frac{R\beta}{1+c} \sqrt{T}.
\end{equation*}
\end{theorem}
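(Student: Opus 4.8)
The plan is to run the standard regret analysis for projected gradient methods, using Lemma \ref{lemma2} as the bridge between the algorithm's linear gradient steps and the function-value gap we actually want to control. First I would exploit the non-expansiveness of the Euclidean projection: since $x^*\in\K$, the update $x_{t+1}=\text{Proj}_{\K}(x_t+\eta_t\nabla f_t(x_t))$ satisfies $\|x_{t+1}-x^*\|^2\leq \|x_t+\eta_t\nabla f_t(x_t)-x^*\|^2$. Expanding the right-hand side and rearranging yields the one-step inequality
\begin{equation*}
\langle \nabla f_t(x_t),x^*-x_t\rangle \leq \frac{\|x_t-x^*\|^2-\|x_{t+1}-x^*\|^2}{2\eta_t}+\frac{\eta_t}{2}\|\nabla f_t(x_t)\|^2,
\end{equation*}
and the $\beta$-Lipschitz assumption gives $\|\nabla f_t(x_t)\|\leq \beta$.

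Next I would apply Lemma \ref{lemma2} with $z=x^*$ and $x=x_t$ to obtain $f_t(x^*)-(1+c_{f_t})f_t(x_t)\leq \langle \nabla f_t(x_t),x^*-x_t\rangle-\frac{\mu}{2}\|x^*-x_t\|^2$. Since $c_{f_t}\leq c$ and $f_t$ is non-negative, $(1+c_{f_t})f_t(x_t)\leq (1+c)f_t(x_t)$, so the curvature can be uniformized to the single constant $c$. Chaining this with the projection inequality and summing over $t\in[T]$ gives
\begin{equation*}
\sum_{t=1}^T f_t(x^*)-(1+c)\sum_{t=1}^T f_t(x_t)\leq \sum_{t=1}^T\Big(\frac{\|x_t-x^*\|^2-\|x_{t+1}-x^*\|^2}{2\eta_t}-\frac{\mu}{2}\|x_t-x^*\|^2\Big)+\frac{\beta^2}{2}\sum_{t=1}^T\eta_t.
\end{equation*}
Dividing through by $1+c$ produces the $(\frac{1}{1+c})$-regret on the left, and it remains to bound the right-hand side for the two step-size choices.

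The technical heart — and the step I expect to be the main obstacle — is handling the bracketed sum in the strongly DR-submodular case with $\eta_t=\frac{1}{\mu t}$. Writing $D_t=\|x_t-x^*\|^2$, the bracketed sum becomes $\frac{\mu}{2}\big(\sum_{t=1}^T t(D_t-D_{t+1})-\sum_{t=1}^T D_t\big)$. A summation-by-parts (Abel) computation shows $\sum_{t=1}^T t(D_t-D_{t+1})=\sum_{t=1}^T D_t-T D_{T+1}$, so the bracket collapses to $-\frac{\mu T}{2}D_{T+1}\leq 0$: the strong-concavity penalty supplied by Lemma \ref{lemma2} exactly absorbs the deficit created by the shrinking step sizes, which is precisely what upgrades a $\O(\sqrt{T})$ bound to $\O(\ln T)$. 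What survives is $\frac{\beta^2}{2}\sum_{t=1}^T \frac{1}{\mu t}\leq \frac{\beta^2}{2\mu}(1+\ln T)$, and dividing by $1+c$ gives the claimed logarithmic bound.

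Finally, for the DR-submodular case ($\mu=0$) with the constant step size $\eta_t=\eta=\frac{R}{\beta\sqrt{T}}$, the strong-concavity term vanishes and the distance terms telescope to $\frac{D_1-D_{T+1}}{2\eta}\leq \frac{R^2}{2\eta}$ using the diameter bound $D_1\leq R^2$, while $\frac{\beta^2}{2}\sum_{t=1}^T\eta=\frac{\beta^2\eta T}{2}$. Substituting the chosen $\eta$ balances the two terms at $\frac{R\beta\sqrt{T}}{2}$ each, and dividing by $1+c$ yields the $\frac{R\beta}{1+c}\sqrt{T}$ bound.
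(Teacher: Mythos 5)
Your proposal is correct and follows essentially the same route as the paper's proof: projection non-expansiveness plus Lemma \ref{lemma2} with $z=x^*$, $x=x_t$, then telescoping under the two step-size choices. Your Abel-summation computation showing the bracketed sum collapses to $-\frac{\mu T}{2}\|x_{T+1}-x^*\|^2\leq 0$ is the same calculation the paper performs via the identity $\frac{1}{2\eta_t}-\frac{\mu}{2}=\frac{1}{2\eta_{t-1}}$ (with the $t=1$ term vanishing), just written in a different but equivalent form.
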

For the online monotone DR-submodular maximization problem with bounded curvature, the only prior study was done by \citet{harvey2020improved} where the authors proposed an algorithm for the special setting where the DR-submodular functions $\{f_t\}_{t=1}^T$ are the multilinear extensions of corresponding submodular set functions $\{F_t\}_{t=1}^T$ and they showed that the algorithm obtains an $\O(\sqrt{T})$ $(1-\frac{c}{e}-\epsilon)$-regret bound with $\frac{n^2}{\epsilon}$ projections per iteration. In comparison, while the approximation ratio in Theorem \ref{thm3} is slightly worse, Algorithm \ref{alg3} performs \emph{only a single projection per step} (hence a significantly lower computational complexity) and its logarithmic regret bound is superior in the strongly DR-submodular setting.
\begin{algorithm}[t]
	\caption{Online Gradient Ascent (OGA)}
	\begin{algorithmic}\label{alg3}
		\STATE \textbf{Input}: Convex set $\K$, $x_1\in \K$, $L$-smooth and $\mu$-strongly DR-submodular $\{f_t\}_{t=1}^T$, $\{\eta_t\}_{t=1}^T$.
		\STATE \textbf{Output}: $\{x_t\}_{t=1}^T$.
		\FOR{$t=1$ {\bfseries to} $T$}
		\STATE Play $x_t$ and receive the reward $f_t(x_t)$.
		\STATE Set $x_{t+1}=\text{Proj}_{\K}\big(x_t+\eta_t\nabla f_t(x_t)\big)$.
		\ENDFOR
	\end{algorithmic}
\end{algorithm}
\setcounter{equation}{0}
\section{Computing the smoothness parameter}\label{sec:5}
Both the SDRFW and PGA algorithms require knowledge of the smoothness parameter $L$ of the objective function to be implemented. In this section, we show how computing $L$ of a twice differentiable $\mu$-strongly DR-submodular objective function $f$ ($\mu\geq 0$) could be formulated as a convex optimization problem that could be solved efficiently prior to running the algorithms. Given that our technique applies to the DR-submodular setting (i.e., $\mu=0$) as well, the results of this section are useful for the proposed algorithms in prior works for DR-submodular maximization. In particular, while \citet{hassani2017gradient} chose the step size of (stochastic) PGA as a function of the smoothness parameter $L$ to obtain the theoretical approximation guarantees in their work, they mentioned that estimating $L$ is difficult in general and poses a challenge for implementation. Therefore, they suggested an alternative adaptive step size rule (as a function of the iteration number) with no theoretical performance guarantees in their experiments. This section precisely addresses the aforementioned challenge.\\
As we defined smoothness earlier in Section \ref{sec:2}, we need to find a constant $L>0$ such that for all $x\in \K$, the smallest eigenvalue of $\nabla^2 f(x)$ is lower bounded by $-L$, i.e., $\nabla^2 f(x)\succeq -LI$. This is equivalent to finding $L>0$ such that the largest eigenvalue of $-\nabla^2 f$ is uniformly upper bounded by $L$ everywhere. Given that $f$ is $\mu$-strongly DR-submodular, $-\nabla^2 f$ is an element-wise non-negative matrix. The Perron-Frobenius theorem \citep[Theorem 8.4.4]{horn2012matrix} states that if $-\nabla^2 f$ is irreducible, i.e., the matrix $(I-\nabla^2f)^{n-1}$ is element-wise positive, $-\nabla^2 f$ has a positive real eigenvalue $\lambda_{pf}$ equal to its spectral radius (which is the largest magnitude of its eigenvalues) and an entry-wise positive eigenvector $v_{pf}\succ 0$ corresponding to $\lambda_{pf}$ and therefore, we can set $L=\lambda_{pf}$. In order to check irreducibility of the symmetric matrix $-\nabla^2 f$, we can associate with it an undirected graph $G$ with $n$ vertices labeled $\{1,\dots,n\}$ where there is an edge between vertices $i$ and $j$ if $-\nabla_{ij}^2 f=-\nabla_{ji}^2 f\gneq 0$. $-\nabla^2 f$ is irreducible if and only if its associated graph $G$ is connected. According to a result in the theory of non-negative matrices, the Perron-Frobenius eigenvalue is the solution of the following optimization problem:
\begin{equation}\label{pf}
\begin{array}{ll}
\mbox{minimize}& \lambda\\
\mbox{subject to}& \sum_{j=1}^n -\nabla_{ij}^2f(x)v_j\leq \lambda v_i~\forall i\in[n],
\end{array}
\end{equation}
where the variables are $x\succ 0$, $v\succ 0$ and $\lambda>0$. We show how the above problem could be transformed to a convex optimization problem in many cases.\\
A function $h:\R_{++}^n\to \R$ defined as $h(x)=cx_1^{a_1}\dots x_n^{a_n}$, where $c>0$ and $a_i\in \R~\forall i\in[n]$, is called a monomial. A sum of $m$ monomials, i.e., a function of the form $h(x)=\sum_{s=1}^m c_sx_1^{a_{1s}}\dots x_n^{a_{ns}}$ where $c_s>0~\forall s\in[m]$ and $a_{is}\in \R~\forall i\in[n],s\in[m]$, is called a posynomial. Consider the following optimization problem with variable $x\in \R_{++}^n$:
\begin{equation*}
\begin{array}{ll}
\mbox{minimize}& h_0(x)\\
\mbox{subject to}& h_{1i}(x)\leq 1~\forall i\in[n]\\
& h_{2j}(x)= 1~\forall j\in[p].
\end{array}
\end{equation*}
If $h_0,h_{11},h_{12},\dots,h_{1n}$ are posynomials and $h_{21},\dots,h_{2p}$ are monomials, the above problem is called a Geometric Program (GP). While GPs are not generally convex optimization problems, they can be transformed to convex problems (see \citet[Section 4.5.3]{boyd2004convex} for details).\\
For all $i\in[n]$, we can rewrite the constraints of problem $(\ref{pf})$ in the following equivalent way:
\begin{equation*}
    (\lambda^{-1}v_i^{-1})\sum_{j=1}^n -\nabla_{ij}^2f(x)v_j\leq 1.
\end{equation*}
\noindent If the non-zero entries of $-\nabla^2 f(x)$ are posynomial functions of the variable $x$, the constraints of problem $(\ref{pf})$ are posynomial inequalities and therefore, the optimization problem for computing the smoothness parameter $L=\lambda_{pf}$ can be expressed as a GP.\\
As an example, consider the problem of computing the stability number of an undirected graph $G$ with adjacency matrix $A$ where $f(x)=x^T (-A-I) x+21^Tx$ (introduced in Section \ref{sec:1}) . Without loss of generality, assume that $G$ is connected (otherwise, we can consider each connected component of $G$ separately and use the fact that the stability number of a graph equals the sum of stability numbers of its connected components). Since $G$ is connected, $-\nabla^2f(x)=2A+2I$ is an entry-wise non-negative and irreducible matrix and therefore, computing the smoothness parameter $L$ of $f$ could be formulated as a GP.\\
More generally, consider the class of concave functions with negative dependence. Let $d\geq 2$. If $h_i:\R_+ \to \R$ is $\mu$-strongly concave for all $i\in[n]$ and $\theta_{i_1,\dots,i_r}\leq0$ for all $r\in[d]$ and $(i_1,\dots,i_r)\subseteq [n]$, the following function $f:\R_+^n \to \R$ is $\mu$-strongly DR-submodular:
\begin{equation*}
f(x)=\sum_{i=1}^n h_i (x_i)+\sum_{(i,j):i\neq j}\theta_{ij}x_i x_j+\dots+\sum_{(i_1,\dots,i_d):i_r\neq i_s~\forall r,s\in[d]}\theta_{i_1,\dots,i_d}x_{i_1}\dots x_{i_d}.
\end{equation*}
It is easy to see that all off-diagonal entries of $-\nabla^2 f(x)$ are posynomials. If for all $i\in[n]$, $-h_i''(x_i)$ is a posynomial as well, the smoothness parameter of $f$ could be computed as the solution of a GP.
\section{Numerical examples}\label{sec:6}
\begin{figure}[t]
    \centering
    \includegraphics[scale=0.34]{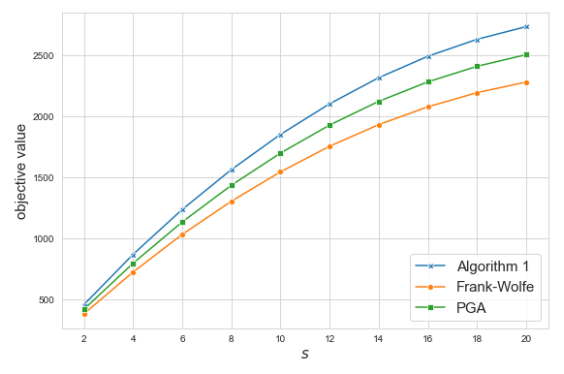}\\
  \caption{Comparison of algorithms for a class of strongly DR-submodular quadratic functions.}\label{fig:1}
\end{figure}
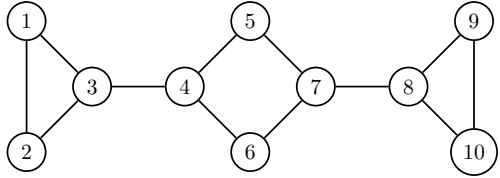
\begin{figure}[b]
\centering
\resizebox{0.4\textwidth}{!}{%
\begin{tikzpicture}[node distance={15mm}, thick, main/.style = {draw, circle}] 
\node[main] (4) {$4$}; 
\node[main] (5) [above right of=4] {$5$}; 
\node[main] (6) [below right of=4] {$6$}; 
\node[main] (7) [above right of=6] {$7$}; 
\node[main] (8) [right of=7] {$8$}; 
\node[main] (9) [above right of=8] {$9$}; 
\node[main] (10) [below right of=8] {$10$}; 
\node[main] (3) [left of=4] {$3$}; 
\node[main] (1) [above left of=3] {$1$}; 
\node[main] (2) [below left of=3] {$2$}; 
\draw (1) -- (2); 
\draw (1) -- (3); 
\draw (2) -- (3); 
\draw (3) -- (4); 
\draw (4) -- (5); 
\draw (4) -- (6); 
\draw (5) -- (7); 
\draw (6) -- (7); 
\draw (7) -- (8); 
\draw (8) -- (9); 
\draw (8) -- (10);
\draw (9) -- (10); 
\end{tikzpicture} 
}%
\caption{Graph $G$ with $n=10$ used in Experiment 2}
\label{fig:2}
\end{figure}
For the first experiment, we set $n=25$ and chose $\K=\{x\in \R^n:1^Tx\leq s,0\preceq x\preceq 1\}$. We considered the class of indefinite quadratic functions and defined $f(x)=(\frac{1}{2}x-\textbf{1})^THx$, where $H\in \R^{n\times n}$ is a matrix whose entries are uniformly distributed in the range $[-10,-5]$. Therefore, $\mu=5$ in this setting. We computed $L$ using the technique described in Section \ref{sec:5} and set $K=\lceil\frac{L}{\mu}\rceil$. We ran Algorithm \ref{alg1}, Frank-Wolfe variant of \citet{bian2017guaranteed} and PGA for $K$ iterations using different values of $s$ in the range $[2,20]$ (using $x_1=0$ for PGA) and plotted the utility of the output of all three algorithms in each setting. The plot is depicted in Figure \ref{fig:1}. This plot shows that Algorithm \ref{alg1} manages to obtain slightly higher utilities, followed by PGA and the Frank-Wolfe variant of \citet{bian2017guaranteed}.

As the second experiment, we studied the problem of computing the stability number $s(G)$ for two graphs. The first graph is provided in Figure \ref{fig:2}. It is easy to see that $s(G)=4$ for this graph (e.g., vertices $\{3,5,6,8\}$ form a maximum stable set of $G$). As it was mentioned earlier in Section \ref{sec:1}, we set $\K=\{x\in \R^{n}:1^Tx=1,x_i\geq 0~\forall i\in[n]\}$ where $n=10$. Also, we defined $f(x)=x^T (-A-I) x+21^Tx$ and using the formula $s(G)^{-1}=2-\max_{x\in \K}f(x)$, we set $\frac{1}{2-f(x)}$ as the estimate of the stability number for $x\in \K$. Since all the diagonal entries of $\nabla^2f(\cdot)$ are equal to $-2$, we have $\mu=2$. We also computed $L$ using the method presented in Section \ref{sec:5}. We ran PGA for this problem and plotted $\frac{1}{2-f(x_k)}$ versus the iteration number $k$ in Figure \ref{fig:3}(a). As the plot shows, PGA converges to the optimal value $4$ after only 10 iterations. As the second example, we considered a graph with $n=1024$ vertices from \url{https://oeis.org/A265032/a265032.html} that contains a collection of graph instances commonly used in coding theory. Using an algorithm with a running time of 200 hours, \citet{niskanen2003cliquer} managed to show that the stability number of this graph is 196. The performance of PGA for this problem is plotted in Figure \ref{fig:3}(b). The algorithm converges to the value 182 as the estimate of the stability number and therefore, the performance of PGA for this problem is significantly better than the approximation guarantee proved in Theorem \ref{thm2}. Note that the domain in the second experiment does not contain the origin and thus, Algorithm \ref{alg1} is not applicable to this problem.
\begin{figure*}
  \centering
  \begin{tabular}[b]{cc}
    \includegraphics[scale=0.34]{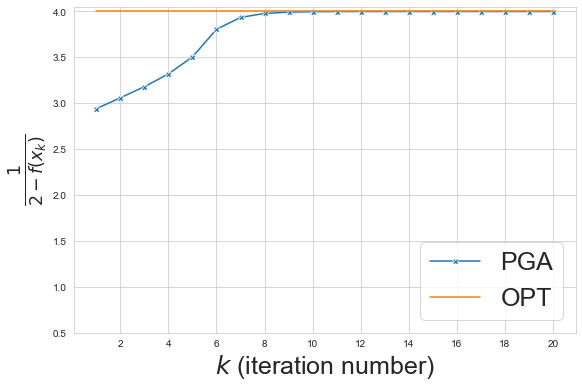} &  \includegraphics[scale=0.34]{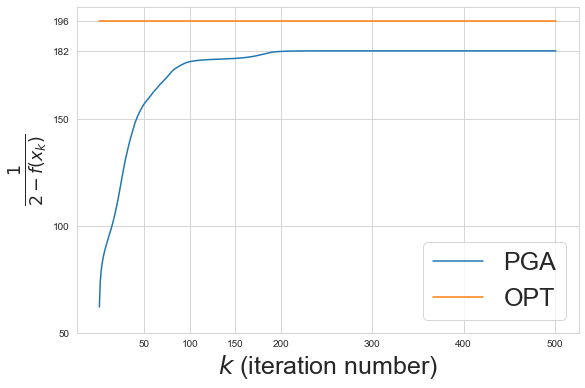}\\
    \small (a) & \small (b)
  \end{tabular}
  \caption{Computing the stability number $s(G)$ for a graph $G$ with (a) $n=10$ vertices and $s(G)=4$, (b) $n=1024$ vertices and $s(G)=196$.}\label{fig:3}
\end{figure*}
\section{Conclusion and future work}\label{sec:7}
In this paper, we considered the class of monotone, $L$-smooth and $\mu$-strongly DR-submodular functions with bounded curvature $c\in[0,1]$ and we proposed a number of first-order gradient methods for this problem along with their approximation guarantees and convergence rates.\\
This work could be extended in a number of interesting directions. Throughout this paper, we assumed that we have access to the exact gradient of the objective function. However, in many applications, it is difficult to compute the gradient exactly, but an unbiased estimate of the gradient can be easily obtained. It is interesting to study stochastic gradient methods for this setting and provide performance guarantees. Secondly, in this paper, we introduced the class of strongly DR-submodular functions with respect to the norm $\|\cdot\|_2$. As we showed in Lemma \ref{lemma1}, strongly DR-submodular functions are strongly concave along non-negative directions with respect to $\|\cdot\|_2$. This definition could be easily extended to other norms as well. For instance, a twice differentiable function $f$ is $\mu$-strongly DR-submodular with respect to the norm $\|\cdot\|_1$ if \emph{all} entries of the Hessian $\nabla^2 f(x)$ are upper bounded by $-\mu$ for all $x$. Similarly, while the smoothness of $f$ over the domain $\K$ was defined with respect to the Euclidean norm, in some cases, $f$ and $\K$ are not well-behaved in the $\|\cdot\|_2$ norm and $L$ scales with the ambient dimension $n$ leading to slower convergence rates for our proposed algorithms in large-scale applications. In such cases, one can study mirror ascent methods that are designed to adapt to smoothness in general norms.
\bibliography{references}   
\newpage
\section*{Appendix}
\setcounter{equation}{0}
\section*{Examples of strongly DR-submodular functions}
$\bullet$ \textbf{Indefinite quadratic functions.} Let $f(x)=\frac{1}{2}x^T Hx+h^T x+c$ where $H$ is a symmetric matrix. If $H$ is entry-wise non-positive, $f$ is a DR-submodular function and if in addition, $H_{ii}\leq -\mu$ holds for all $i\in[n]$, $f$ is $\mu$-strongly DR-submodular. Such quadratic utility functions have a wide range of applications. In particular, price optimization with continuous prices \citep{NIPS2016_6301} and computing stability number of graphs \citep{motzkin1965maxima} are both non-concave (strongly) DR-submodular quadratic optimization problems.\\
$\bullet$ \textbf{Concave functions with negative dependence.} Let $d\geq 2$. If $h_i:\R_+ \to \R$ is (strongly) concave for all $i\in[n]$ and $\theta_{i_1,\dots,i_r}\leq0$ for all $r\in[d]$ and $(i_1,\dots,i_r)\subseteq [n]$, the following function $f:\R_+^n \to \R$ is (strongly) DR-submodular:
\begin{equation*}
f(x)=\sum_{i=1}^n h_i (x_i)+\sum_{(i,j):i\neq j}\theta_{ij}x_i x_j+\dots+\sum_{(i_1,\dots,i_d):i_r\neq i_s~\forall r,s\in[d]}\theta_{i_1,\dots,i_d}x_{i_1}\dots x_{i_d}.
\end{equation*}
\section*{Missing proofs}
\subsection*{Proof of Lemma 1}
Without loss of generality, assume $v\succeq 0$ holds (analysis for the $v\prec 0$ is similar). For any $z\in \K$, $v\succeq 0$ and $i\in[n]$, we have:
\begin{align*}
    [\nabla^2 f(z)v]_i+\mu v_i&= \nabla^2_{ii}f(z)v_i+\sum_{j\neq i}\nabla^2_{ij}f(z)v_j+\mu v_i\\
    &=\underbrace{(\nabla^2_{ii}f(z)+\mu)}_{\leq 0}\underbrace{v_i}_{\geq 0}+\sum_{j\neq i}\underbrace{\nabla^2_{ij}f(z)}_{\leq 0}\underbrace{v_j}_{\geq 0}\\
    &\leq 0.
\end{align*}
Therefore, $\nabla^2f(z)v+\mu v\preceq 0$ holds. We can use the mean value theorem twice to write:
\begin{align*}
    f(x+v)-f(x)-\langle\nabla f(x),v\rangle&=\int_{0}^1\langle \nabla f(x+tv),v\rangle dt - \langle \nabla f(x),v\rangle\\
    &=\int_{0}^1\langle \nabla f(x+tv)-\nabla f(x),v \rangle dt\\
    &=\int_{0}^1 \langle t\nabla^2f(z_t)v,v\rangle dt,
\end{align*}
where $z_t$ is in the line segment between $x$ and $x+tv$. Combining the above two inequalities, we have:
\begin{align*}
    f(x+v)-f(x)-\langle\nabla f(x),v\rangle&=\int_{0}^1 \langle t\underbrace{(\nabla^2f(z_t)v+\mu v)}_{\preceq 0},\underbrace{v}_{\succeq 0}\rangle dt-\mu\int_{0}^1 t\langle v,v\rangle dt\\
    &\leq\frac{-\mu}{2}\|v\|^2.
\end{align*}
Thus, $f$ is $\mu$-strongly concave along the non-negative direction $v$.
\subsection*{Proof of Theorem 1}
Define $g(x)=f(x)-\ell^Tx$ where for all $i\in[n]$, $\ell_i=\min_x \nabla_i f(x)$. Note that similar to $f$, $g$ is also a normalized, monotone, $\mu$-strongly DR-submodular and $L$-smooth function. For instance, in order to verify the $\mu$-strong concavity of $g$ along non-negative directions (which will be used in the proof), for all $x\in \K$ and $v\succeq 0$ or $v\preceq 0$, we can write:
\begin{align*}
    g(x+v)&=f(x+v)-\langle \ell, x+v\rangle \\
    &\leq f(x)+\langle \nabla f(x),v\rangle -\frac{\mu}{2}\|v\|^2-\langle \ell, x+v\rangle\\
    &= \underbrace{f(x)-\langle \ell, x\rangle}_{=g(x)} +\langle \underbrace{\nabla f(x)-\ell}_{=\nabla g(x)},v\rangle -\frac{\mu}{2}\|v\|^2\\
    &= g(x)+\langle \nabla g(x),v\rangle -\frac{\mu}{2}\|v\|^2,
\end{align*}
where the inequality uses the $\mu$-strong DR-submodularity of $f$.
For all $k\in\{0,\dots,K-1\}$, we can write:
\begin{equation*}
    g(x_{k+1})\geq g(x_k)+\frac{1}{K}\langle \nabla g(x_k), v_k\rangle -\frac{L}{2K^2}\|v_k\|^2.
\end{equation*}
Rearranging the terms, we can write:
\begin{equation}\label{eq:1}
    g(x_{k+1})-g(x_k)\geq \frac{1}{K}\langle \nabla g(x_k), v_k\rangle -\frac{L}{2K^2}\|v_k\|^2.
\end{equation}
For all $k\in\{0,\dots,K\}$, define $\phi (k)=(1-\frac{1}{K})^{K-k}g(x_k)+\langle \ell, x_k\rangle$. For a fixed $k\in\{0,\dots,K-1\}$, we have:
\begin{align*}
    K\big(\phi (k+1)-\phi (k)\big)&=K(1-\frac{1}{K})^{K-k-1}g(x_{k+1})-K(1-\frac{1}{K})^{K-k}g(x_k)+\langle \ell, v_k\rangle\\
    &=\frac{(1-\frac{1}{K})^{K-k-1}\big(g(x_{k+1})-g(x_k)\big)+1/K(1-\frac{1}{K})^{K-k-1}g(x_{k})}{1/K}+\langle \ell, v_k\rangle\\
    &=\frac{(1-\frac{1}{K})^{K-k-1}\big(g(x_{k+1})-g(x_k)\big)}{1/K}+(1-\frac{1}{K})^{K-k-1}g(x_{k})+\langle \ell, v_k\rangle\\
    &\overset{\text{(a)}}\geq\frac{(1-\frac{1}{K})^{K-k-1}\big(1/K\langle \nabla g(x_k), v_k\rangle -L\|v_k\|^2/2K^2\big)}{1/K}+(1-\frac{1}{K})^{K-k-1}g(x_{k})+\langle \ell, v_k\rangle\\
    &=(1-\frac{1}{K})^{K-k-1}\big(\langle \nabla g(x_k), v_k\rangle -\frac{L}{2K}\|v_k\|^2\big)+(1-\frac{1}{K})^{K-k-1}g(x_{k})+\langle \ell, v_k\rangle\\
    &=\langle(1-\frac{1}{K})^{K-k-1}\nabla g(x_k)+\ell,v_k\rangle -(1-\frac{1}{K})^{K-k-1}\frac{L}{2K}\|v_k\|^2+(1-\frac{1}{K})^{K-k-1}g(x_{k})\\
    &\overset{\text{(b)}}\geq(1-\frac{1}{K})^{K-k-1}\langle \nabla g(x_k),x^*\rangle +\langle \ell, x^*\rangle+\frac{\mu (1-\frac{1}{K})^{K-k-1}}{2}\|v_k\|^2\\
    &-\frac{\mu(1-\frac{1}{K})^{K-k-1} }{2}\|x^*\|^2-(1-\frac{1}{K})^{K-k-1}\frac{L}{2K}\|v_k\|^2+(1-\frac{1}{K})^{K-k-1}g(x_{k}),
\end{align*}
where (a) follows from inequality \ref{eq:1} and (b) is due to the update rule of the SDRFW algorithm for $v_k$. We can use the monotonocity and strong DR-submodularity of $g$ respectively to write:
\begin{align*}
    g(x^*)-g(x_k)&\leq g(x^*+x_k)-g(x_k)\\
    &\leq \langle \nabla g(x_k),x^*\rangle - \frac{\mu}{2}\|x^*\|^2.
\end{align*}
Putting the above two inequalities together, we have:
\begin{align*}
    K\big(\phi (k+1)-\phi (k)\big)&\geq (1-\frac{1}{K})^{K-k-1}\big(g(x^*)-g(x_k)+\frac{\mu}{2}\|x^*\|^2\big) +\langle \ell,x^*\rangle+\frac{\mu (1-\frac{1}{K})^{K-k-1}}{2}\|v_k\|^2\\
    &-\frac{\mu(1-\frac{1}{K})^{K-k-1} }{2}\|x^*\|^2-(1-\frac{1}{K})^{K-k-1}\frac{L}{2K}\|v_k\|^2+(1-\frac{1}{K})^{K-k-1}g(x_{k})\\
    &=(1-\frac{1}{K})^{K-k-1}g(x^*)+\langle \ell,x^*\rangle+(1-\frac{1}{K})^{K-k-1}\big(\frac{\mu}{2}-\frac{L}{2K}\big)\|v_k\|^2
\end{align*}
Therefore, if we set $K=\lceil\frac{L}{\mu}\rceil$ and divide both sides by $K$, we obtain:
\begin{equation*}
    \phi (k+1)-\phi (k)\geq \frac{1}{K}(1-\frac{1}{K})^{K-k-1}g(x^*)+\frac{1}{K}\langle \ell,x^*\rangle.
\end{equation*}
Applying the inequality for all $k\in\{0,\dots,K-1\}$ and taking the sum, we have:
\begin{align*}
    f(x_K)=g(x_K)+\langle \ell, x_K\rangle=\phi(K)-\phi(0)&\geq \big(\frac{1}{K}\sum_{k=0}^{K-1}(1-\frac{1}{K})^{K-k-1}\big)g(x^*)+\langle \ell,x^*\rangle\\
    &=\big(\frac{1}{K}\frac{1-(1-\frac{1}{K})^K}{1/K}\big)g(x^*)+\langle \ell,x^*\rangle\\
    &\geq (1-\frac{1}{e})g(x^*)+\langle \ell,x^*\rangle,
\end{align*}
where the last inequality uses $(1-\frac{1}{K})^K\leq \frac{1}{e}$. Therefore, $f(x)\geq (1-\frac{1}{e})g(x^*)+\langle \ell,x^*\rangle$ holds.\\
Using the mean value theorem, we have $f(x^*)=\sum_{i=1}^n \nabla_i f(u) x^*_i$ where $u$ is in the line segment between 0 and $x^*$. Therefore, we can use the definition of $\ell$ and the curvature to write:
\begin{equation*}
    \ell^Tx^*=\sum_{i=1}^n \ell_i x_i^*\geq (1-c_f)\sum_{i=1}^n \nabla_i f(u)x_i^*=(1-c_f)f(x^*).
\end{equation*}
Putting the above inequalities together, we have:
\begin{align*}
    f(x)&\geq (1-\frac{1}{e})g(x^*)+\ell^Tx^*\\
    &= (1-\frac{1}{e})f(x^*)-(1-\frac{1}{e})\ell^Tx^*+\ell^Tx^*\\
    &= (1-\frac{1}{e})f(x^*)+\frac{1}{e}\ell^T x^*\\
    &\geq (1-\frac{1}{e})f(x^*)+\frac{1-c_f}{e}f(x^*)\\
    &= (1-\frac{c_f}{e})f(x^*).
\end{align*}
\subsection*{Proof of Theorem 2}
We first provide a lemma that will be used in the proof.
\begin{lemma}\label{lemma3}
If we apply PGA with the update rule $x_{k+1}=\text{Proj}_{\K}\big(x_k+\frac{1}{L}\nabla f(x_k)\big)~\forall k\in[K]$ to an $L$-smooth function $f$, the following holds:
\begin{equation*}
    f(x_{k+1})\geq f(x_k)+\frac{L}{2}\|x_{k+1}-x_k\|^2\geq f(x_k).
\end{equation*}
\end{lemma}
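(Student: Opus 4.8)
The plan is to combine the first-order optimality (variational inequality) characterization of the Euclidean projection onto the convex set $\K$ with the lower-bound form of $L$-smoothness stated in Section \ref{sec:2}. The result is the standard ``sufficient increase'' guarantee for projected gradient methods, and it uses none of the DR-submodular structure — only convexity of $\K$ and smoothness of $f$.

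First I would record the projection step as $x_{k+1}=\text{Proj}_{\K}\big(x_k+\frac{1}{L}\nabla f(x_k)\big)$ and invoke the standard fact that for the projection of any point $p$ onto a convex set, $\langle p-\text{Proj}_{\K}(p),z-\text{Proj}_{\K}(p)\rangle\leq 0$ holds for every $z\in\K$. Applying this with $p=x_k+\frac{1}{L}\nabla f(x_k)$ and the feasible test point $z=x_k\in\K$ yields $\langle x_k+\frac{1}{L}\nabla f(x_k)-x_{k+1},\,x_k-x_{k+1}\rangle\leq 0$. Writing $x_k-x_{k+1}=-(x_{k+1}-x_k)$, expanding the inner product, and multiplying through by $L$ rearranges this into the key gradient bound $\langle \nabla f(x_k),x_{k+1}-x_k\rangle\geq L\|x_{k+1}-x_k\|^2$, which says that the directional gain predicted by the gradient along the PGA step is at least $L$ times the squared step length.

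Finally I would apply the $L$-smoothness lower bound $f(x_{k+1})\geq f(x_k)+\langle \nabla f(x_k),x_{k+1}-x_k\rangle-\frac{L}{2}\|x_{k+1}-x_k\|^2$ and substitute the gradient bound just derived, replacing the inner-product term by $L\|x_{k+1}-x_k\|^2$; the two terms proportional to $\|x_{k+1}-x_k\|^2$ then combine to leave exactly $f(x_{k+1})\geq f(x_k)+\frac{L}{2}\|x_{k+1}-x_k\|^2$. The second inequality $\geq f(x_k)$ is immediate because $\frac{L}{2}\|x_{k+1}-x_k\|^2\geq 0$.

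The only real obstacle is the sign bookkeeping in the projection variational inequality: one must use the correct test point $z=x_k$ and track the minus signs so that the resulting inequality is a \emph{lower} bound on $\langle \nabla f(x_k),x_{k+1}-x_k\rangle$ rather than an upper bound. Once that step is set up correctly, the remainder is a single substitution into the smoothness inequality.
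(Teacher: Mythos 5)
Your proposal is correct and follows essentially the same route as the paper's proof: both apply the $L$-smoothness lower bound and then use the projection optimality condition to get $\langle \nabla f(x_k),x_{k+1}-x_k\rangle\geq L\|x_{k+1}-x_k\|^2$, which the paper invokes implicitly as ``the optimality condition for $x_{k+1}$'' and you spell out via the variational inequality with test point $z=x_k$. Your sign bookkeeping checks out, so there is nothing to fix.
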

\begin{proof}
We can use the $L$-smoothness of $f$ and write:
\begin{align*}
    f(x_{k+1})&\geq f(x_k)+\langle \nabla f(x_k),x_{k+1}-x_k\rangle -\frac{L}{2}\|x_{k+1}-x_k\|^2\\
    &\geq f(x_k)+L\|x_{k+1}-x_k\|^2-\frac{L}{2}\|x_{k+1}-x_k\|^2\\
    &= f(x_k)+\frac{L}{2}\|x_{k+1}-x_k\|^2\\
    &\geq f(x_k),
\end{align*}
where the second inequality follows from the optimality condition for $x_{k+1}$, i.e., $x_{k+1}=\argmin_{z\in \K}\|z-x_k-\frac{1}{L}\nabla f(x_k)\|^2$.
\end{proof}
First, consider the DR-submodular setting ($\mu=0$). Note that for all $k\in[K]$, we can rewrite the update rule of PGA in the following equivalent way:
\begin{equation*}
    x_{k+1}=\argmax_{x\in \K}\big(f(x_k)+\langle \nabla f(x_k),x-x_k\rangle -\frac{L}{2}\|x-x_k\|^2\big)=\text{Proj}_{\K}\big(x_k+\frac{1}{L}\nabla f(x_k)\big).
\end{equation*}
If we denote $h(x):=f(x_k)+\langle \nabla f(x_k),x-x_k\rangle -\frac{L}{2}\|x-x_k\|^2$, $h$ is $L$-strongly concave. Therefore, we can write:
\begin{align*}
    h(x^*)&\leq h(x_{k+1})+\langle \nabla h(x_{k+1}),x^*-x_{k+1}\rangle - \frac{L}{2}\|x_{k+1}-x^*\|^2\\
    &\overset{\text{(a)}}\leq h(x_{k+1}) - \frac{L}{2}\|x_{k+1}-x^*\|^2,
\end{align*}
where (a) follows from the optimality condition for $x_{k+1}$, i.e., $x_{k+1}=\argmax_{x\in \K}h(x)$. We can write:
\begin{align*}
    f(x_{k+1})&\overset{\text{(b)}}\geq h(x_{k+1})\\
    &\geq h(x^*)+\frac{L}{2}\|x_{k+1}-x^*\|^2\\
    &=f(x_k)+\langle \nabla f(x_k),x^*-x_k\rangle - \frac{L}{2}\|x_k-x^*\|^2+\frac{L}{2}\|x_{k+1}-x^*\|^2\\
    &\overset{\text{(c)}}\geq f(x^*)-c_ff(x_k)+\frac{L}{2}\big(\|x_{k+1}-x^*\|^2-\|x_{k}-x^*\|^2\big),
\end{align*}
where (b) follows from the $L$-smoothness of $f$ and (c) uses the result of Lemma 2 with $z=x^*$, $x=x_k$ and $\mu=0$. Rearranging the terms and taking the sum over $k\in[K]$, we obtain:
\begin{align*}
    K(1+c_f)f(x_{K+1})&\overset{\text{(d)}}\geq \sum_{k=1}^K \big(f(x_{k+1})+c_ff(x_k)\big)\\
    &\geq Kf(x^*)+\frac{L}{2}\|x_{K+1}-x^*\|^2-\frac{L}{2}\|x_1-x^*\|^2\\
    &\geq Kf(x^*)-\frac{L}{2}\|x_1-x^*\|^2,
\end{align*}
where (d) is due to Lemma \ref{lemma3}. Dividing both sides by $K(1+c_f)$, we derive the result as stated.\\
Now, we move on to the general strongly DR-submodular setting with $\mu>0$. Note that for all $k\in[K]$, we can equivalently write the update rule of PGA as follows:
\begin{equation*}
    x_{k+1}=\argmax_{x\in \K}\big(\langle \nabla f(x_k),x-x_k\rangle -\frac{L}{2}\|x-x_k\|^2\big)=\text{Proj}_{\K}\big(x_k+\frac{1}{L}\nabla f(x_k)\big).
\end{equation*}
Using the $L$-smoothness property of $f$, we can write:
\begin{align}
    f(x_{k+1})-f(x_k)&\geq \max_{x\in \K}\big(\langle \nabla f(x_k),x-x_k\rangle -\frac{L}{2}\|x-x_k\|^2\big)\nonumber\\
    &= \frac{1}{2L}\|\nabla f(x_k)\|^2-\frac{L}{2}\min_{x\in \K}\|x-x_k-\frac{1}{L}\nabla f(x_k)\|^2\nonumber\\
    &= \frac{1}{2L}\|\nabla f(x_k)\|^2-\frac{L}{2}\|x_{k+1}-x_k-\frac{1}{L}\nabla f(x_k)\|^2\label{eq:3}.
\end{align}
Setting $z=x^*$ and $x=x_k$ in Lemma 2, we have:
\begin{align}
    \frac{\mu}{L}\big(f(x^*)-(1+c_f)f(x_k)\big)&\leq \frac{\mu}{L}\big(\langle \nabla f(x_k),x^*-x_k\rangle-\frac{\mu}{2}\|x^*-x_k\|^2\big)\nonumber\\
    &=-\frac{\mu^2}{2L}\|x^*-x_k-\frac{1}{\mu}\nabla f(x_k)\|^2+\frac{1}{2L}\|\nabla f(x_k)\|^2.\label{eq:4}
\end{align}
Since $x_{k+1}=\argmin_{z\in \K}\|z-x_k-\frac{1}{L}\nabla f(x_k)\|^2$, $0\leq \frac{\mu}{L}\leq 1$ and $\K$ is convex, we can write:
\begin{align*}
    \|x_{k+1}-x_k-\frac{1}{L}\nabla f(x_k)\|^2&\leq \|\frac{\mu}{L}x^*+(1-\frac{\mu}{L})x_k-x_k-\frac{1}{L}\nabla f(x_k)\|^2\\
    &= \|\frac{\mu}{L}(x^*-x_k)-\frac{1}{L}\nabla f(x_k)\|^2\\
    &= \frac{\mu^2}{L^2}\|x^*-x_k-\frac{1}{\mu}\nabla f(x_k)\|^2.
\end{align*}
Multiplying both sides of the above inequality by $-\frac{L}{2}$, we obtain:
\begin{equation}\label{eq:5}
  -\frac{\mu^2}{2L}\|x^*-x_k-\frac{1}{\mu}\nabla f(x_k)\|^2 \leq -\frac{L}{2}\|x_{k+1}-x_k-\frac{1}{L}\nabla f(x_k)\|^2.
\end{equation}
Combining the inequalities \ref{eq:3}, \ref{eq:4} and \ref{eq:5}, we have:
\begin{equation*}
    \frac{\mu}{L}\big(f(x^*)-(1+c_f)f(x_k)\big)\leq f(x_{k+1})-f(x_k).
\end{equation*}
Therefore, we can write:
\begin{equation*}
    f(x^*)-(1+c_f)f(x_{k+1})\leq (1-\frac{\mu}{L})\big(f(x^*)-(1+c_f)f(x_k)\big)+c_f\big(f(x_k)-f(x_{k+1})\big)\leq (1-\frac{\mu}{L})\big(f(x^*)-(1+c_f)f(x_k)\big),
\end{equation*}
where the last inequality uses the result of Lemma \ref{lemma3}. Applying the above inequality recursively for all $k\in[K]$, we obtain:
\begin{align*}
    f(x^*)-(1+c_f)f(x_{K+1})&\leq (1-\frac{\mu}{L})^K\big(f(x^*)-(1+c_f)f(x_1)\big)\\
    &\leq e^{-\mu K/L}\big(f(x^*)-(1+c_f)f(x_1)\big).
\end{align*}
Rearranging the terms and dividing both sides by $1+c_f$, we obtain the result as stated.
\subsection*{Proof of Theorem 3}
We can write:
\begin{align*}
    \|x_{t+1}-x^*\|^2&\leq \|x_t+\eta_t\nabla f_t(x_t)-x^*\|^2\\
    &=\|x_t-x^*\|^2+\eta_t^2\|\nabla f_t(x_t)\|^2-2\eta_t\langle \nabla f_t(x_t),x^*-x_t\rangle.
\end{align*}
Rearranging the terms and dividing both sides by $2\eta_t$, we can equivalently write:
\begin{align}
\langle \nabla f_t(x_t),x^*-x_t\rangle &\leq \frac{\|x_t-x^*\|^2-\|x_{t+1}-x^*\|^2+\eta_t^2\|\nabla f_t(x_t)\|^2}{2\eta_t}\nonumber\\
&\leq \frac{\|x_t-x^*\|^2-\|x_{t+1}-x^*\|^2}{2\eta_t}+\frac{\eta_t\beta^2}{2}\label{eq:6}.
\end{align}
Using the result of Lemma 2 with $x=x_t$, $z=x^*$, $c_f=\max_{t\in[T]}c_{f_t}$, and $\mu=0$ and combining it with the above inequality, we have:
\begin{equation*}
    f_t(x^*)-(1+c_f)f_t(x_t)\leq \frac{\|x_t-x^*\|^2-\|x_{t+1}-x^*\|^2}{2\eta_t}+\frac{\eta_t\beta^2}{2}.
\end{equation*}
Setting $\eta_t=\eta=\frac{R}{\beta \sqrt{T}}~\forall t\in[T]$ and taking the sum over $t\in[T]$, we obtain:
\begin{align*}
    \sum_{t=1}^T\big(f_t(x^*)-(1+c_f)f_t(x_t)\big)&\leq \sum_{t=1}^T \frac{\|x_t-x^*\|^2-\|x_{t+1}-x^*\|^2}{2\eta}+\frac{\eta \beta^2 T}{2}\\
    &=\frac{\|x_1-x^*\|^2-\|x_{T+1}-x^*\|^2}{2\eta}+\frac{\eta \beta^2 T}{2}\\
    &\leq\frac{R^2}{2\eta}+\frac{\eta \beta^2 T}{2}.\\
\end{align*}
Plugging in the value of $\eta$ and dividing both sides by $1+c_f$, we conclude:
\begin{equation*}
    \sum_{t=1}^T\big(\frac{1}{1+c_f}f_t(x^*)-f_t(x_t)\big)\leq \frac{R\beta \sqrt{T}}{2(1+c_f)}+\frac{R\beta \sqrt{T}}{2(1+c_f)}=\frac{R\beta \sqrt{T}}{1+c_f}.
\end{equation*}
For the setting where all $\{f_t\}_{t=1}^T$ are $\mu$-strongly DR-submodular, we can combine the result of Lemma 2 and inequality \ref{eq:6} to obtain the following:
\begin{equation*}
    f_t(x^*)-(1+c_f)f_t(x_t)\leq \frac{\|x_t-x^*\|^2-\|x_{t+1}-x^*\|^2}{2\eta_t}-\frac{\mu}{2}\|x_t-x^*\|^2+\frac{\eta_t\beta^2}{2}.
\end{equation*}
If we set $\eta_t=\frac{1}{\mu t}~\forall t\in[T]$, we have $\frac{1}{2\eta_1}-\frac{\mu}{2}=0$ and $\frac{1}{2\eta_t}-\frac{\mu}{2}=\frac{\mu(t-1)}{2}=\frac{1}{2\eta_{t-1}}~\forall t>1$. Therefore, we can rewrite the above inequality in the following equivalent way:
\begin{equation*}
    f_t(x^*)-(1+c_f)f_t(x_t)\leq \frac{\|x_t-x^*\|^2}{2\eta_{t-1}}-\frac{\|x_{t+1}-x^*\|^2}{2\eta_t}+\frac{\eta_t\beta^2}{2}. 
\end{equation*}
Taking the sum over $t\in[T]$, we have:
\begin{align*}
    \sum_{t=1}^T\big(f_t(x^*)-(1+c_f)f_t(x_t)\big)&\leq -\frac{1}{2\eta_1}\|x_2-x^*\|^2+\sum_{t=2}^T\big(\frac{\|x_t-x^*\|^2}{2\eta_{t-1}}-\frac{\|x_{t+1}-x^*\|^2}{2\eta_t}\big)+\frac{\beta^2}{2}\sum_{t=1}^T \eta_t\\
    &=-\frac{1}{2\eta_1}\|x_2-x^*\|^2+\frac{1}{2\eta_1}\|x_2-x^*\|^2-\frac{1}{2\eta_T}\|x_{T+1}-x^*\|^2+\frac{\beta^2}{2}\sum_{t=1}^T \eta_t\\
    &\leq \frac{\beta^2}{2}\sum_{t=1}^T \eta_t\\
    &\leq \frac{\beta^2}{2\mu}(1+\ln T).
\end{align*}
Dividing both sides by $1+c_f$, we obtain the regret bound as follows:
\begin{equation*}
    \sum_{t=1}^T\big(\frac{1}{1+c_f}f_t(x^*)-f_t(x_t)\big)\leq \frac{\beta^2}{2\mu (1+c_f)}(1+\ln T).
\end{equation*}
\end{document}